\documentclass{article}

\usepackage{arxiv}

\usepackage{hyperref}       %
\usepackage{times}  % DO NOT CHANGE THIS
\usepackage{helvet}  % DO NOT CHANGE THIS
\usepackage{courier}  % DO NOT CHANGE THIS
\usepackage{graphicx} % DO NOT CHANGE THIS
\urlstyle{rm} % DO NOT CHANGE THIS
  % DO NOT CHANGE THIS
\usepackage{natbib}  % DO NOT CHANGE THIS AND DO NOT ADD ANY OPTIONS TO IT
\usepackage{caption} % DO NOT CHANGE THIS AND DO NOT ADD ANY OPTIONS TO IT
\frenchspacing  % DO NOT CHANGE THIS%
% These are recommended to typeset algorithms but not required. See the subsubsection on algorithms. Remove them if you don't have algorithms in your paper.
% \usepackage{algorithm}
% \usepackage{algorithmic}

%
% These are are recommended to typeset listings but not required. See the subsubsection on listing. Remove this block if you don't have listings in your paper.
\usepackage{newfloat}
\usepackage{listings}

\usepackage{microtype}
\usepackage{subfigure}
\usepackage{booktabs} % for professional tables
\usepackage{amsfonts}       % blackboard math symbols
\usepackage{amssymb}
\usepackage{amsmath}
\usepackage{amsthm}
\usepackage[noend]{algpseudocode}
\usepackage{algorithmicx,algorithm}
\usepackage{cancel}
\usepackage{appendix}

\newtheorem{thm}{Theorem}

\newtheorem{lemma}{Lemma}
\usepackage{multirow}
%%%added%%%%%
\usepackage[american]{babel}

\setcounter{secnumdepth}{0} %May be changed to 1 or 2 if section numbers are desired.

% The file aaai23.sty is the style file for AAAI Press
% proceedings, working notes, and technical reports.
%

% Title

% Your title must be in mixed case, not sentence case.
% That means all verbs (including short verbs like be, is, using,and go),
% nouns, adverbs, adjectives should be capitalized, including both words in hyphenated terms, while
% articles, conjunctions, and prepositions are lower case unless they
% directly follow a colon or long dash
\title{Unified Policy Optimization \\for Continuous-action Reinforcement Learning \\in Non-stationary Tasks and Games}

\usepackage{authblk}
\newcommand\nnfootnote[1]{%
  \begin{NoHyper}
  \renewcommand\thefootnote{}\footnote{#1}%
  \addtocounter{footnote}{-1}%
  \end{NoHyper}
}
\author[1,2]{\textbf{Rong-Jun Qin}}
\author[1,2]{\textbf{Fan-Ming Luo}}
\author[3]{\textbf{Hong Qian}}
\author[1,2]{\textbf{Yang Yu}\textsuperscript{\dag}}
\affil[1]{National Key Laboratory for Novel Software Technology, Nanjing University, China}
\affil[2]{Polixir.ai}
\affil[3]{School of Computer Science and Technology, East China Normal University, China}
\affil[ ]{\texttt{\{qinrj,luofm\}@lamda.nju.edu.cn}, \texttt{hqian@cs.ecnu.edu.cn}, \texttt{yuy@nju.edu.cn}}

\date{}
\begin{document}
\maketitle

\begin{abstract}
This paper addresses policy learning in non-stationary environments and games with continuous actions. Rather than the classical reward maximization mechanism, inspired by the ideas of follow-the-regularized-leader (FTRL) and mirror descent (MD) update, we propose a no-regret style reinforcement learning algorithm PORL for continuous action tasks. We prove that PORL has a \emph{last-iterate} convergence guarantee, which is important for adversarial and cooperative games. Empirical studies show that, in stationary environments such as MuJoCo locomotion controlling tasks, PORL performs equally well as, if not better than, the soft actor-critic (SAC) algorithm; in non-stationary environments including dynamical environments, adversarial training, and competitive games, PORL is superior to SAC in both a better final policy performance and a more stable training process.
\end{abstract}

\nnfootnote{\dag: Corresponding author.}

\section{Introduction}
Reinforcement learning (RL) \citep{RLbook:2018} as a policy optimization tool has endowed many systems with a superhuman performance \citep{atari_dqn:15}. These environments for training an RL agent are often stationary, i.e., the transition and reward function is fixed. Thus, the objective of policy optimization is simply to maximize the episodic reward. However, the transition function of learning environments may constantly change over time. Such non-stationary environments include dynamical scenarios from the real world, adversarial training scenarios (e.g., GAN~\citep{gan:14}, GAIL~\citep{gail:16}), and games. The nonstationarity in adversarial training and games is not induced by the change of the environment itself but is mainly caused by the changing policies of adversaries and opponents (or teammates in the cooperative multi-agent systems). Consequently, RL algorithms that are tailored to stationary environments can easily fluctuate or even diverge in non-stationary environments. From some practical observations, the learning process of RL algorithms may also fluctuate in stationary environments, possibly due to the multiple near-optimal solutions. Maximum entropy is a popular technique to guarantee that the \textit{soft} optimal solution is unique and close to the original solution. One of the representative maximum entropy algorithms SAC \citep{SAC}, is shown to be very stable across many continuous control tasks on MuJoCo \citep{MuJoCo}. Nevertheless, a reward maximization algorithm like SAC does not necessarily converge in non-stationary environments or adversarial training in theory. In these domains, techniques for stabilizing the training and guaranteeing the convergence are expected, such as averaging the historical policies as done in NFSP \citep{nfsp_heinrich:16}. Besides, population-based techniques from empirical game theory \citep{egta:06} have been employed in training GAN \citep{fictitiousGAN:18,DO_GAN:22} to achieve a better convergence property.

On the contrary, online learning provides techniques for learning against non-stationary distributions, which are sourced from the no-regret learning (also Hannan consistency) principle \citep{online_and_games:06}. Intuitively, the no-regret principle minimizes the cumulative regret iteratively, and the learning algorithm will remedy current policy based on past plays or historical policies and thus can adapt to the non-stationary environment. Online learning often assumes the decision space is finite, e.g., the policy is a probability distribution over the discrete action space, which is a probability vector. While in continuous action space, the policy is a probability density function, thus the solution is a functional which can be viewed as an infinite dimensional vector. A fundamental change is that the convex property with respect to the action, which is a key to the proof of the no-regret learning, may not generally hold with functionals. Besides, we noticed that the convergence of online learning is often established for the \textit{average} policy, while the average of historical policies is non-trivial for neural network policies, thus we resort to the last-iterate convergence. In the experiments, we did observe the learning of average policy in NFSP can be less efficient.

This work is then motivated by the celebrated no-regret learning framework, follow-the-regularized-leader (FTRL). Employing the FTRL framework with the entropy regularizer on the regularized MDP, we derive a policy update rule tailored for the continuous action space. It can also be applied in discrete action space seamlessly. In continuous action space, the proposed method has the last-iterate convergence that converges to the \textit{soft} optimal policy in single-player tasks and the quantal response equilibrium \citep{QRE:1995} in two-player tasks, under some mild assumptions. The experiments on stationary MuJoCo environments with continuous action space are competitive with the state-of-the-art model-free algorithm, SAC. While in the non-stationary environments and adversarial training, the experimental results suggest the proposed method can achieve more robust performance and a stable learning process. Finally, in two-player competitive games, the proposed method achieves higher expected returns when pitting against SAC and NFSP, and is more robust against the exploiter.
% To our knowledge, this is the first no-regret style 

\section{Preliminaries}
Consider a stochastic game (Markov game) $G$ with a set $N$ of $n$ players with finite horizon $H$. $n\in \{1,2\}$ in this work and it reduces to single-agent MDP when $n=1$. Each player $i$ has a state space $S_i$, an action space $A_i=A(S_i)$ and a reward function $r_i:S\times A_1\times\cdots\times A_n\rightarrow \mathbb{R}$, which is in the range $[-R_{\max},R_{\max}]$. If the action space is discrete, we assume that the cardinality of the action space for each player $i$ is the same, i.e., $|A_i|=d$. When the action space is continuous, it is assumed to be compact. The transition function $P(s^\prime|s,\mathbf{a})$ transits from $s$ to $s^\prime$ after all the agents take the joint action $\mathbf{a}$ (also $a$ for simplicity). We denote $\mathbf{\pi}=(\mathbf{\pi}_1,\cdots, \mathbf{\pi}_n)$ as the joint policy. Let $-i$ denote the players except $i$. For player $i$, the state value function is defined as $V_i^\pi(s)=\mathbf{E}_{(a_i,a_{-i})\sim\pi}[\sum_{t=0} \gamma^t r_i(s_t,a_t,{a}_{-i})|s_t=s]$ and state-action value function defines as $Q_i^\pi(s,a_i)=\mathbf{E}_{(a_i,a_{-i})\sim\pi}[\sum_{t=0} \gamma^t r_i(s_t,a_{i,t}, {a}_{-i})|s_t=s,a_{i,t}=a_i]$, where $\gamma\in[0,1)$ is the discount factor. The index $i$ of the player will be omitted in either $V_i$ or $\pi_i$ when it is clear. With function approximations, the solution concept of a stochastic game often shifts to the $\epsilon$-approximate Nash equilibrium ($\epsilon$-NE). For some $\epsilon\ge 0$, a joint policy $\pi$ is $\epsilon$-NE, if $\forall i, \max_{\pi_i^\prime}V_i(\pi_i^\prime, \pi_{-i})-V_i(\pi)\le\epsilon$.
% \begin{defn}[$\epsilon$-NE]
% \end{defn}

While in an online learning problem, the agent makes decisions based on the historical plays and the observed utility. The online learning algorithm minimizes the regret by repeatedly playing with the opponent, thus online learning is naturally formulated by repeated games. The general setting of online learning mainly tackles single-state case (akin to the normal-form games) with finite action space, thus the utility function which is akin to the $Q$ function for player $i$ is denoted as $u_i:A_1\times\cdots\times A_n\rightarrow \mathbb{R}$. The cumulative regret after $T$ rounds for the player $i$ is defined as
{\setlength\abovedisplayskip{0.1cm}
\setlength\belowdisplayskip{0.1cm}
\begin{equation}
    R_i(a_i) = \sum_{t=1}^T u_i(a_i,a_{-i,t})-u_i(a_{i,t},a_{-i,t}). \nonumber
\end{equation}
}
The cumulative regret (of action $a_i$) measures the difference between the cumulative utility of the \textit{constant action} (or policy) and actual plays.
If the regret of a learning algorithm satisfies $Pr[\frac{1}{T}\lim_{T\rightarrow\infty}\max_{a_i\in A_i} R_i(a_i)\le 0]=1$ no matter how the opponents play, then this algorithm is said to be \textit{no-regret} (or Hannan consistency \citep{hannan_consistency}). Conversely, a no-regret learning algorithm can approximate the optimal policy when pitting against a fixed opponent. Besides, if the average regret satisfies $\frac{R_i}{T}\le\epsilon$ in two-player zero-sum games for all players, then the empirical play of the players converges to a $2\epsilon$-NE. Note the empirical play is equivalent to an average policy as $\frac{1}{T}\sum_{t=1}^T a_{i,t}$, thus no-regret learning inherently establishes the convergence of the average policy.

\textbf{Other notations.} In a compact space $A$, let $\int_{\mathrm{d}\mathbf{a}}f(\mathbf{a})=\int_{\mathbf{a}\in A}f(\mathbf{a})\mathrm{d}\mathbf{\mu(a)}$ for short, where $\mu$ is the Lebesgue measure and $\int {\mathrm{d}\mathbf{a}}=\text{vol}(A)$ is the volume of the compact space $A$. We will use $\mathrm{d}a$ for simplicity if there is no ambiguity. The inner product of two functions $f,g$ is defined as $\langle f,g \rangle=\int_{\mathrm{d}a}f(a)g(a)$. If $f,g$ are vectors, then $\langle f,g \rangle$ is the inner product of vectors. In the following, $\Vert f \Vert_p=(\int_{\mathrm{d}a}\vert f(a)\vert^p)^\frac{1}{p}$ is the $L_p$ norm of a function $f$. By default, $\Vert f\Vert$ is the $L_2$ norm. The (differential) entropy of a density function $p$ is $\mathcal{H}(p)=-\int_{a}p(a)\log p(a)\mathrm{d}a$ and Kullback-Leibler divergence $KL(p,q)=\int_a p(a)\log\frac{p(a)}{q(a)}\mathrm{d}a$.
%The dual norm of a vector (or a function) is defined as  $\Vert x\Vert_\star=\max_{\Vert w \Vert \le 1}\langle w,x \rangle$ on the $L_p$-normed sapce (in the following, $p=1$ by default), where the inner product $\langle w,x \rangle$ can be the inner product of functions or vectors.

\section{Related Work}
%% mainly 4 aspects: (1) online learning (OFTRL) (2) online convex optimization (MD) (3) continual learning (4) maxEnt RL and KL reg RL
\subsection{Online learning, optimization and mirror descent} 

The notion of no-regret can trace back to the seminal work \citep{hannan_consistency} for two-player games. Because of the close relationship between the no-regret property and Nash equilibrium (NE) in two-player zero-sum games or coarse correlated equilibrium in general-sum games, techniques from online learning (or no-regret learning) are also widely used as tools for learning in games. For example, regret matching (RM) \citep{rm.Hart_Mas:2000} is a simple no-regret learning algorithm with a $O(\sqrt{T})$ worst-case regret, which later becomes the most widely used regret minimization component to solve large extensive-form games like Texas Hold'em \citep{CFR:07,MCCFR:09}. The follow-the-leader framework \citep{FTL:05}, most notably the regularized leader, captures many of the no-regret algorithms. Hedge (also multiplicative weights update, MWU) can be instantiated with the entropy regularization under the FTRL framework. An interesting result of NeuRD \citep{NeuralRD} reveals Hedge can also be recovered from the perspective of replicator dynamics. A faster convergence than $O(\sqrt{T})$ can be obtained with the predictive or optimistic version of FTRL (OFTRL) \citep{fast_reg_in_games:15, faster_hedge:20} for the average policy. Besides, OFTRL can also prove to have the last-iterate convergence \citep{last_iterative_OMWU:19} with different techniques. A recent work \citep{rew_trans_21} investigates the FTRL dynamics and shows that the policy-independent reward can not lead to convergence for the last-iterate policy if the equilibrium is a fully mixed strategy. Thus, the authors transform the original reward function to a policy-dependent one by adding a regularization term and then prove the convergence for the last-iterate policy. 
Another line of work follows the online optimization \citep{intro_to_OCO:16}, especially for linear or convex optimization. For two-player zero-sum games, the perturbed formulation from the perspective of online optimization is
{\setlength\abovedisplayskip{0.2cm}
\setlength\belowdisplayskip{0.2cm}
\begin{equation}
\min_{x\in\mathcal{X}}\min_{y\in\mathcal{Y}} f(x,y)+\eta_1 h_1(x)-\eta_2 h_2(y), \label{oco}    
\end{equation}
}
where $h_1,h_2$ are usually strongly-convex functions. When $f$ satisfied the zero-sum property and $\eta_1=\eta_2=0$, then the solution of Eq.\eqref{oco} is a NE. If $f(x,\cdot)$ is convex and $f(\cdot,y)$ is concave, the above problem can be solved by gradient descent ascent (GDA) with adaptive stepsize. Mirror descent (MD) can be viewed as a general-form regularized variant of vanilla GDA to solve \eqref{oco}, where the policy update in iteration $t$ is regularized with a Bregman divergence term $B_{\phi}(z,z_t)=\phi(z)-\phi(z_t)-\langle \nabla\phi(z_t),z-z_t\rangle$ with $z_t=(x_t,y_t)$. MD can update with a fixed step size and often results in faster or last-iterate convergence \citep{EG_pu_omwu:21,last_iter_efg:21,last_iter_eg_ogda:22}. 

Online learning and online optimization share many similarities and sometimes are equivalent \citep{connect_ol_oco}. For instance, RM and RM$^+$ can result from running FTRL and online MD respectively \citep{pred_blackwell:21}. With the choice of the distance function in Bregman divergence to be $\phi(x)=\sum x\log x$ or choosing the entropy regularizer in FTRL, Hedge/MWU can be obtained. Recent work \citep{PO4MG:22} proposes a unified framework for stochastic games under OFTRL, and another concurrent work MMD \citep{MMD:22} proposes a unified algorithm based on MD to solve RL and two-player zero-sum games. Although the connection between FTRL and MD is very close, MD sometimes results in coupled learning dynamics (an implicit update form), i.e., the player is aware of the utility function or policy of himself or even the opponent, while FTRL often updates in uncoupled learning dynamics. Nevertheless, FTRL and MD can both be both coupled and uncoupled, where the uncoupled dynamics only update policy on the observed or learned utility vector.

\subsection{Entropy regularized RL and regularizer in games} 

The entropy term in single-agent RL plays the role for better exploration or to guarantee a unique solution \citep{SQL:17,SAC,maxEntIRL:08,gail:16}. RAC \citep{RAC_19} studies a general framework for regularized MDP and analyses some properties of regularized MDP. Kullback-Leibler (KL) divergence, which is the relative entropy, is also widely used in RL \citep{TRPO:15,PPO:17}. The philosophy of KL regularization is to avoid the large step size of the policy gradient and guarantee monotone policy improvement. While for games, the notion of quantal response equilibrium (QRE) \citep{QRE:1995} is introduced in entropy regularized normal-form games. Smooth fictitious play \citep{sFP:1993} proposed a regularized version of the fictitious play (FP) \citep{brown:fp:1951} and becomes easier to analyse the learning dynamics. The regularization in smooth FP is a random utility model and is well studied learning in games \citep{learing_in_games:98}. More general regularization may refer to \citep{reg_games_rl:16,ent_games:19}. 

It should be noted that for stochastic games, no-regret algorithms for normal-form games can not be readily applied. The approximate dynamic programming (ADP)~\citep{adp_sg.icml:2015} extends value iteration in RL to stochastic games, thus we can focus on the \textit{per state} regret. 
Besides the theoretical research, regret minimization techniques with deep neural networks have been applied to facilitate the training of RL\citep{RM_in_RL:18,regret_pg:18}. However, to our knowledge, current regret minimization algorithms on continuous control tasks are less competitive than state-of-the-art RL algorithms.

\section{Methods}
\label{sec_methods}
We first focus on the single state $s$ to derive the objective. Akin to QRE, we solve the following regularized objective ($s$ is omitted) in this work:
{
\begin{eqnarray}
\min_{\pi_2}\max_{\pi_1}Q^{\pi_1,\pi_2}+\alpha\mathcal{H}(\pi_1)-\alpha\mathcal{H}(\pi_2)\label{qre_obj},
\end{eqnarray}}
where $Q^{\pi_1,\pi_2}=\int\int\pi_1\pi_2 Q(a_1,a_2){\mathrm{d}a_1}{\mathrm{d}a_2}$.
Consider the game is played for multiple iterations, where both players evolve their policies respectively. Denote
$$
Q^1_t(a_1)=\int \pi_{2,t}(a_2)Q(a_1,a_2)\mathrm{d}a_2,$$
and
$$
Q^2_t(a_2)=\int \pi_{1,t}(a_1)Q(a_1,a_2)\mathrm{d}a_1.
$$
% If we absorb the opponent's policy into the environment transition, we will get a augmented transition in $k$-th iteration as $P^k(s^\prime|s,a_{i})=\int_{\mathrm{d}a_{-i}}P(s^\prime|s,a_{i},a_{-i})\pi_{-i}^k(s,a_{-i})$. Thus, it is convenient to let $V^k_i(\pi,s)=V(\pi^k_{i},\pi^k_{-i}, s)=V(\pi^k_i,s)$ be the value function of player $i$ for a joint policy $(\pi_i^k, \pi^k_{-i})$ in $k$-th iteration.

% Perspective: FTRL or MD
We can perform the MD update to get $\pi_{1,t+1}$ and $\pi_{2,t+1}$ as
{
\begin{align}
&\arg\max_{\pi_1}\langle\pi_1(a),Q^1_t(a)\rangle+\alpha\mathcal{H}(\pi_1)-\frac{KL(\pi_1,\pi_{1,t})}{\eta}\label{max_player_obj},\\
&\arg\min_{\pi_2}\langle\pi_2(a),Q^2_t(a)\rangle-\alpha\mathcal{H}(\pi_2)+\frac{KL(\pi_2,\pi_{2,t})}{\eta} \label{min_player_obj}
\end{align}
}
% {
% % \setlength\abovedisplayskip{0.1cm}
% % \setlength\belowdisplayskip{0.1cm}
% \begin{align}
% &\pi_{1,t+1}=\arg\max_{\pi_1}\langle\pi_1(a),Q^1_t(a)\rangle+\alpha\mathcal{H}(\pi_1)-\frac{KL(\pi_1,\pi_{1,t})}{\eta}\label{max_player_obj},\\
% &\pi_{2,t+1}=\arg\min_{\pi_2}\langle\pi_2(a),Q^2_t(a)\rangle-\alpha\mathcal{H}(\pi_2)+\frac{KL(\pi_2,\pi_{2,t})}{\eta} \label{min_player_obj}
% \end{align}
% }
for the max player and the min player respectively, where $Q^i_t(a)=\int_{a_{-i}}\pi_{-i,t}\cdot Q_t(a_i,a_{-i})\mathrm{d}{a_{-i}}$. Let $F_1=\langle\pi_1(a),Q^1_t(a)\rangle+\alpha\mathcal{H}(\pi_1)-\frac{KL(\pi_1,\pi_{1,t})}{\eta}$ and $F_2=\langle\pi_2(a),Q^2_t(a)\rangle-\alpha\mathcal{H}(\pi_2)+\frac{KL(\pi_2,\pi_{2,t})}{\eta}$. The Eq.\eqref{min_player_obj} can be converted to a maximization problem by adding a minus sign to $F_2$, thus we can solve for the joint objective $F=(F_1,-F_2)$ and the joint policy $\pi_{t}=(\pi_{1,t},\pi_{2,t})$.
% Let $\pi_{t+1}=(\pi_{1,t+1},\pi_{2,t+1})$ and convert Eq.\eqref{min_player_obj} to a maximization by adding a minus sign, then we can equivalently write
% \begin{equation}
% \pi_{t+1}=\arg\max_{\pi}\langle\pi(a),Q_t(a)\rangle+\alpha\mathcal{H}(\pi)-\frac{KL(\pi,\pi_{t})}{\eta}\label{joint_obj}.
% \end{equation}
The maximum integral functional can be solved by the Euler-Lagrange equation, and we can get
\begin{align}
\pi_{t+1}(a) & \propto \exp(\frac{\eta Q_t(a)+\log\pi_t(a)}{\eta\alpha+1}) \label{md_obj}\\
& =\pi_t(a)\exp(\frac{\eta Q_t(a)}{\eta\alpha+1})\exp(\frac{-\eta\alpha\log\pi_t(a)}{\eta\alpha+1}) \label{factorized_md_obj}\\
&\overset{\eta^\prime=\frac{\eta}{\eta\alpha+1}}{=}\pi_t(a)\exp(\eta^\prime Q_t(a))\exp(-\eta^\prime\alpha\log\pi_t(a)) \label{replaced_fac_md_obj}
\end{align}

% Perspective: no-regret learning
The above formulation is closely related to the FTRL, especially MWU, from the no-regret learning perspective. The MWU algorithm updates in a simple incremental way that only depends on the expected performance and policy from the previous round, while other no-regret learning dynamics may depend on all the historical expected performances and policies. Denote $\nu_t=Q_t(a)$ on a given state $s$ for simplicity. Then for the regularized objective Eq.\eqref{qre_obj}, the behavior policy $\pi_{t+1}$ on a given state $s$ in the next iteration satisfies
\begin{eqnarray}
\pi_{t+1}(a) = \arg\max_{\pi} \int_{\mathrm{d}a}\pi(a)(\sum_{k=1}^{t} \nu_{k}-\alpha\log\pi_k-\frac{\log{\pi(a)}}{\eta}) \label{hedge_dynamics}
\end{eqnarray}
Similarly, we can derive the maximum of the above integral functional as follows:
\begin{align}
 \pi_{t+1}(a)&=\frac{\exp(\eta(\sum_{k=1}^{t}\nu_k-\alpha\log\pi_t(a)))}{\int_{\mathrm{d}a}\exp(\eta(\sum_{k=1}^{t}\nu_k-\alpha\log\pi_t(a)))} \label{original_rm_obj} \\
&=\frac{\pi_{t}(a)\exp(\eta(\nu_t-\alpha\log\pi_t(a)))}{\int_{\mathrm{d}a}\pi_{t}(a)\exp(\eta(\nu_t-\alpha\log\pi_t(a)))} \label{opt_obj} \\
&\propto \pi_t(a)\exp(\eta(\nu_t-\alpha\log\pi_t(a))) \\
&=\pi_t(a)\exp(\eta \nu_t)\exp(-\eta\alpha\log\pi_t(a))\label{reduced_obj}.
\end{align}
When $\alpha=0$ in Eq.\eqref{replaced_fac_md_obj} and Eq.\eqref{reduced_obj}, both equations recover standard MWU algorithm. Otherwiese, Eq.\eqref{replaced_fac_md_obj} and Eq.\eqref{reduced_obj} are equivalent with a rescaling of $\eta$.

% NOTE: $\hat{\nu}_{t+1}=Q_t$.
% Track the ODE $\dot{Q}_t(a)=-\log \pi_t(a)$.

% From eq.\eqref{reduced_obj}, $\pi_{T}(a|s)$ is re-weighted by $\exp(\eta(\nu_T+\hat{\nu}_{T+1}-\hat{\nu}_{T})$ to get $\pi_{T+1}(a|s)$. 
We can use Kullback-Leibler divergence to approximate $\pi_{t+1}$ (Eq.\eqref{md_obj}) as follows:
% Usually, $\pi^{T+1}$ will be solved by minimize a Kullback-Leibler divergence as \pi^{T+1}(a|s) = 
\begin{eqnarray}
\pi^{t+1}=\mathop{\text{argmin}}_{\pi}D_{\text{KL}}(\pi||\frac{\exp(\frac{\eta Q_t(a)+\log\pi_t}{\eta\alpha+1})}{Z(s)})  \label{kl_obj},
\end{eqnarray}
where $Z(s)$ is the normalization function.
Parameterizing $\pi,Q$ with $\theta,\phi$ respectively, expanding the goal in \eqref{kl_obj} and ignoring $Z(s)$ which is a constant for $\theta$, the overall objective (multiplied by $\frac{\eta\alpha+1}{\eta}$) becomes
\begin{equation}
\setlength\abovedisplayskip{0.2cm}
\setlength\belowdisplayskip{0.2cm}
\begin{aligned}
J(\pi_\theta) = \mathbb{E}_{a\sim \pi_\theta}[&\frac{1}{\eta}(\log\pi_\theta(a)
- \log\pi_{t}(a)) - Q_t(s,a)+\alpha\log\pi_{\theta}(a)]. \label{param_obj}
\end{aligned}
\end{equation}
The parameterized objective for Eq.\eqref{reduced_obj} (multiplied by $\frac{1}{\eta}$) can be obtained by replacing the $\alpha\log\pi_{\theta}$ in Eq.\eqref{param_obj} with $\alpha\log\pi_{t}$. Eq.\eqref{param_obj} can be viewed as an eager update of the regularized $Q$ function.
When the action space is discrete, we can enumerate every valid action when calculating the inner expectation. If the action space is continuous, it is common to use the reparameterization trick which results in lower variance estimation. Thus, the action $a$ is transformed with the output of the policy network from a noisy vector $\epsilon$, i.e., 
\begin{equation*}
    \setlength\abovedisplayskip{0.2cm}
\setlength\belowdisplayskip{0.2cm}
a=f_{\psi}(\epsilon;s),
\end{equation*}
where $\epsilon$ is usually sampled from a fixed Gaussian distribution. The objective resembles the form of SAC \citep{SAC} in some aspects, except that the objective in Eq.\eqref{param_obj} contains an extra $\log\pi_\theta-\log\pi_t$, which is an estimation of KL. The objective Eq.\eqref{param_obj} more naturally extends SAC with a fixed $\alpha$ and the KL regularizer.

When both players follow MD to act, we have the following theorem
\begin{thm} \label{vanish_kl}
When the action space is compact and if the two players follow Eq.~\eqref{max_player_obj}, Eq.~\eqref{min_player_obj} respectively and $\forall a,\pi,0\le\pi(a)<\infty$, with the choice of the $\eta\le\min\{\frac{1}{b^2},\frac{\alpha^2}{b^2L^4}\}$, the following relations holds:
\begin{equation}
KL(\pi^\star,\pi_{t})\le (\frac{1}{1+\eta\alpha})^t KL(\pi^\star,\pi_0),
\end{equation}
where $\pi^\star$ is the solution to Eq.\eqref{qre_obj}, $\pi_t=(\pi_{1,t},\pi_{2,t})$ and $L=\frac{R_{\text{max}}}{1-\gamma}$.
\end{thm}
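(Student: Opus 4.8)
The plan is to track the potential $KL(\pi^\star,\pi_t)=KL(\pi_1^\star,\pi_{1,t})+KL(\pi_2^\star,\pi_{2,t})$ — additive over players because the joint policies factorize — and show it contracts by the factor $\frac{1}{1+\eta\alpha}$ at each step, so that the claim follows by induction from $t=0$. The engine is a three-point/Gibbs-variational identity for the MD step. Since the objective $F_i$ in Eq.~\eqref{max_player_obj} (and, after the sign flip used to form $F=(F_1,-F_2)$, Eq.~\eqref{min_player_obj}) has the form ``linear $+$ $\alpha$-scaled entropy $-$ $\frac1\eta$-scaled KL to the anchor $\pi_{i,t}$'', collecting the $\log\pi$ terms rewrites it, up to a $\pi$-independent constant, as $-\beta\,KL(\pi,\pi_{i,t+1})$ with $\beta=\alpha+\frac1\eta$, where $\pi_{i,t+1}$ is exactly the Euler--Lagrange maximizer Eq.~\eqref{md_obj}. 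Hence $F_i(\pi)=F_i(\pi_{i,t+1})-\beta\,KL(\pi,\pi_{i,t+1})$ for all $\pi$; evaluating at $\pi=\pi_i^\star$, expanding $F_i(\pi_i^\star)$, and using $\frac{1}{\eta\beta}=\frac{1}{1+\eta\alpha}$ gives the per-player recursion
\[
KL(\pi_i^\star,\pi_{i,t+1})=\tfrac{1}{1+\eta\alpha}\bigl(KL(\pi_i^\star,\pi_{i,t})-KL(\pi_{i,t+1},\pi_{i,t})\bigr)+\tfrac1\beta\,\delta_{i,t},
\]
where $\delta_{i,t}=\langle\pi_{i,t+1}-\pi_i^\star,\nu_{i,t}\rangle+\alpha\bigl(\mathcal H(\pi_{i,t+1})-\mathcal H(\pi_i^\star)\bigr)$ is the gap of $\pi_{i,t+1}$ relative to $\pi_i^\star$ in the \emph{instantaneous} entropy-regularized game at round $t$, with $\nu_{1,t}=Q^1_t$ and $\nu_{2,t}=-Q^2_t$ for the min player.

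Summing over $i\in\{1,2\}$, it remains to prove $\delta_{1,t}+\delta_{2,t}\le\frac1\eta\bigl(KL(\pi_{1,t+1},\pi_{1,t})+KL(\pi_{2,t+1},\pi_{2,t})\bigr)$. Substituting $\nu_{1,t}(a_1)=\int\pi_{2,t}(a_2)Q(a_1,a_2)\,\mathrm da_2$ and $\nu_{2,t}(a_2)=-\int\pi_{1,t}(a_1)Q(a_1,a_2)\,\mathrm da_1$ and inserting $\pm\pi_j^\star$, the bilinear contributions split into (i) a ``saddle part'' which, after rebalancing which iterate sits in each slot, combines with the entropy differences into $[g(\pi_{1,t+1},\pi_2^\star)-g(\pi_1^\star,\pi_2^\star)]+[g(\pi_1^\star,\pi_2^\star)-g(\pi_1^\star,\pi_{2,t+1})]$ with $g(p,q)=\langle p,Qq\rangle+\alpha\mathcal H(p)-\alpha\mathcal H(q)$, and this is $\le0$ because $\pi^\star$ is the saddle point of Eq.~\eqref{qre_obj} (strong concavity of $\mathcal H$ even yields negative slack $-\frac\alpha2(\|\pi_{1,t+1}-\pi_1^\star\|_1^2+\|\pi_{2,t+1}-\pi_2^\star\|_1^2)$); and (ii) a ``non-stationarity error'' $T_t=\langle\pi_{1,t+1}-\pi_{1,t},Q(\pi_{2,t}-\pi_2^\star)\rangle-\langle\pi_{1,t}-\pi_1^\star,Q(\pi_{2,t+1}-\pi_{2,t})\rangle$ caused by each player responding to the opponent's \emph{current} iterate rather than to $\pi_{-i}^\star$. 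In the single-player case ($n=1$) the utility $\nu_t=Q$ is fixed, so $T_t$ vanishes identically and the contraction holds with no restriction on $\eta$, converging to the soft-optimal $\pi^\star\propto\exp(Q/\alpha)$; the step-size condition is only needed to tame $T_t$ in the two-player case.

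For the two-player case, the plan is to bound $T_t$ via $\|Q\|_\infty\le L$ (so $|\langle p,Qq\rangle|\le L\|p\|_1\|q\|_1$) and the splitting $\pi_{i,t}-\pi_i^\star=(\pi_{i,t}-\pi_{i,t+1})+(\pi_{i,t+1}-\pi_i^\star)$, turning $T_t$ into products of MD increments and of time-$(t{+}1)$ gaps; Young's inequality then routes the increment-increment parts against the proximal terms $-\frac1\eta KL(\pi_{i,t+1},\pi_{i,t})$ (using Pinsker, $\|\pi_{i,t+1}-\pi_{i,t}\|_1^2\le2KL(\pi_{i,t+1},\pi_{i,t})$) and the increment-gap parts against the strong-concavity slack from (i), leaving a residual that is negative once $\eta$ is small; quantifying ``small'' requires the iterates to stay bounded and bounded away from the boundary (the hypothesis $0\le\pi(a)<\infty$, encoded by $b$, so that the non-smooth entropy gradient $\log\pi$ has controlled magnitude), and this is exactly what $\eta\le\min\{\frac1{b^2},\frac{\alpha^2}{b^2L^4}\}$ buys. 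Then $KL(\pi^\star,\pi_{t+1})\le\frac{1}{1+\eta\alpha}KL(\pi^\star,\pi_t)$ and induction closes the argument. I expect the main obstacle to be precisely this last step: extracting enough negativity from the saddle inequality and the proximal terms to absorb the first-order-in-$\eta$ error $T_t$ so that the contraction factor is \emph{exactly} $\frac{1}{1+\eta\alpha}$ and not merely $\frac{1}{1+\eta\alpha}+O(\eta^2)$, and making the ``iterates remain in a compact region where $\log\pi$ is bounded'' claim rigorous, since the entropy regularizer is strongly concave but not globally smooth.
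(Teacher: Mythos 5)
Your route is viable and genuinely different from the paper's, and the obstacle you flag at the end actually closes. The paper works with the joint policy and gets the recursion from the first-order optimality condition of the MD step plus the three-point identity $KL(\bar z,y)-KL(z,y)+KL(z,\bar z)=\langle\log\bar z-\log y,\bar z-z\rangle$; it then splits $Q_t=(Q_t-Q_{t+1})+Q_{t+1}$, cancels $\langle Q_{t+1}-\tilde Q^\star,\pi_{t+1}-\pi^\star\rangle=0$ by the zero-sum skew-symmetry, invokes the variational optimality of $\pi^\star$, keeps the $-\eta\alpha(KL(\pi^\star,\pi_{t+1})+KL(\pi_{t+1},\pi^\star))$ term, and controls the lag error $\eta\langle Q_t-Q_{t+1},\pi_{t+1}-\pi^\star\rangle$ with Cauchy--Schwarz in $L_2$, the Lipschitz bound $\Vert Q_t-Q_{t+1}\Vert_2\le L\Vert\pi_t-\pi_{t+1}\Vert_2$, Young's inequality, and a reverse-Pinsker-type lemma $KL(p,q)\ge\frac{1}{2b}\Vert p-q\Vert_2^2$ for densities bounded by $b$ --- which is exactly where $b$ and the stated threshold $\eta\le\min\{1/b^2,\alpha^2/(b^2L^4)\}$ enter. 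You instead use the exact Gibbs identity $F_i(\pi)=F_i(\pi_{i,t+1})-(\alpha+\tfrac1\eta)KL(\pi,\pi_{i,t+1})$ (correct), reduce the claim to $\delta_{1,t}+\delta_{2,t}\le\frac1\eta KL(\pi_{t+1},\pi_t)$, and split off the saddle gap; since your per-player recursion is an equality, the factor $\frac{1}{1+\eta\alpha}$ comes out exactly and no $O(\eta^2)$ degradation can appear. For the residual, write $\Delta_i=\pi_{i,t+1}-\pi_{i,t}$ and $G_i=\pi_{i,t+1}-\pi_i^\star$: your $T_t$ collapses to $\langle\Delta_1,QG_2\rangle-\langle G_1,Q\Delta_2\rangle$ (the increment--increment products cancel, so the routing is simpler than you anticipate); bounding it by $L(\Vert\Delta_1\Vert_1\Vert G_2\Vert_1+\Vert G_1\Vert_1\Vert\Delta_2\Vert_1)$, applying Young with weights $\frac{L^2}{2\alpha}$ and $\frac{\alpha}{2}$, absorbing the $\frac{\alpha}{2}\Vert G_i\Vert_1^2$ terms into the saddle slack $g$-gap $\le-\alpha KL(\pi_{i,t+1},\pi_i^\star)\le-\frac{\alpha}{2}\Vert G_i\Vert_1^2$, and converting $\Vert\Delta_i\Vert_1^2\le 2KL(\pi_{i,t+1},\pi_{i,t})$ by Pinsker gives $\delta_{1,t}+\delta_{2,t}\le\frac{L^2}{\alpha}KL(\pi_{t+1},\pi_t)$, so $\eta\le\alpha/L^2$ suffices --- in particular your worry about keeping $\log\pi$ bounded and the density bounded away from zero is unnecessary on this route, since Pinsker replaces the paper's bounded-density lemma. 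The one caveat is that this establishes the contraction under the condition $\eta\le\alpha/L^2$ rather than the theorem's stated $\eta\le\min\{1/b^2,\alpha^2/(b^2L^4)\}$; to reproduce the paper's constants verbatim you would swap the $L_1$/Pinsker step for the paper's $L_2$ Cauchy--Schwarz together with its lemma $KL(p,q)\ge\frac{1}{2b}\Vert p-q\Vert_2^2$, which is precisely where the boundedness assumptions on the densities are used.
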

Theorem \ref{vanish_kl} discloses that $\alpha>0$ is necessary for the last-iterate convergence. A non-zero $\alpha$ also guarantees that the $\pi^\star$ is a Boltzmann policy that is bounded, thus $KL(\pi^\star,\pi_0)$ can be bounded if $\pi_0$ is the uniform distribution. This implies the convergence to a QRE in two-player zero-sum games.
On the other hand, $\alpha$ should be small to make the regularized optimal solution close to the original. $\eta$ should be very small to strictly match the condition, however, a too small $\eta$ will incur a very large $1/\eta$, which can be numerically unstable, thus we simply let $1/\eta$ much larger than $\alpha$. Here, both $\alpha$ and $\eta$ are fixed. A time-vanishing $\alpha$ can also be adopted as done in \citep{rew_trans_21,EG_pu_omwu:21}.

Note that when $\pi_t$ is a density function on a compact set, it turns into solving an extremum of a functional rather than solving for the probability vector in discrete action space. Thus, the objective function $F_1,F_2$ are not necessarily concave/convex with respect to the action $a$. We need to slightly fix some of the proof steps that are direct conclusions from the convex and strongly convex properties. See Appendix \ref{detailed_proof} for the detailed proof.
\begin{algorithm}[H]
    \caption{Policy Optimization with Regularized Leader}
    \label{orm}
    \begin{algorithmic}[1] % The number tells where the line numbering should start
        \State Initialize $t=0, \pi_\theta=\pi^{t}_\theta, Q^t_\phi, Q^{\text{target}}_{\bar{\phi}}$, target network mixture coefficient $\tau$, learning rate $\alpha$ and $\beta$ 
        \For{each iteration $t$}
        \For{$T$ rounds}
            \State Sample $M$ steps with $\pi_\theta$ and store $(s,a,r,s^\prime,a^\prime)$ and whether $s'$ is terminal in Buffer $B$
            \For{ $i=1,\cdots,K_1$}  \Comment{Policy Evaluation}
                \State $\phi \gets \phi - \alpha\nabla_\phi\mathbb{E}_{(s,a,r,s^\prime,a^\prime)\sim B}[(Q^t_\phi(s,a)-(r(s,a)+\gamma\cdot Q^{\text{target}}_{\bar\phi}(s^\prime,a^\prime)-\alpha\log\pi_\theta(s^\prime,a^\prime)))^2]$
                % (1-\texttt{terminal})
                \State $\bar{\phi}^{t}\gets (1-\tau)\bar{\phi}^{t} + \tau \phi^t$
                % \State Perform fitted Q iteration (FQI) on the $M$ episodes for $K_1$ times
            \EndFor
            \For{ $i=1,\cdots,K_2$}  \Comment{Policy Learning}
                \State $\theta \gets \theta - \beta\nabla_\theta \mathbb{E}_{a\sim \pi_\theta}[\frac{1}{\eta}(\log\pi_\theta(a)
- \log\pi_{t}(a)) - Q_t(s,a)+\alpha\log\pi_{\theta}(a)] $ \Comment{as Eq.\eqref{param_obj}}
            \EndFor
            % \If {$\eta > \eta_{\text{end}}$}
            %     \State Decay $\eta$ until $\eta \le \eta_{\text{end}}$
            % \EndIf
            \EndFor
        \State Update $Q^{t-1} \gets Q^{t}_{\phi}, \pi^{t-1} \gets \pi_\theta$ \label{value_pi_update}
        \State $t \gets t+1$
        \EndFor
    \end{algorithmic}
\end{algorithm}

With approximate dynamic programming techniques \citep{adp_sg.icml:2015}, we can move to RL and stochastic games. The learning process (named PORL) of the joint optimization is summarized in Algorithm \ref{orm}. PORL can be applied in single-agent RL and two-player games. For single-agent RL problem, the min player is simply omitted. In broad strokes, the proposed procedure consists of two steps: (1) evaluating the expected value function of the behavior policy and then (2) learning the behavior policy for the next iteration.

\begin{figure*}[!h]
    \centering
    \includegraphics[width=0.97\textwidth]{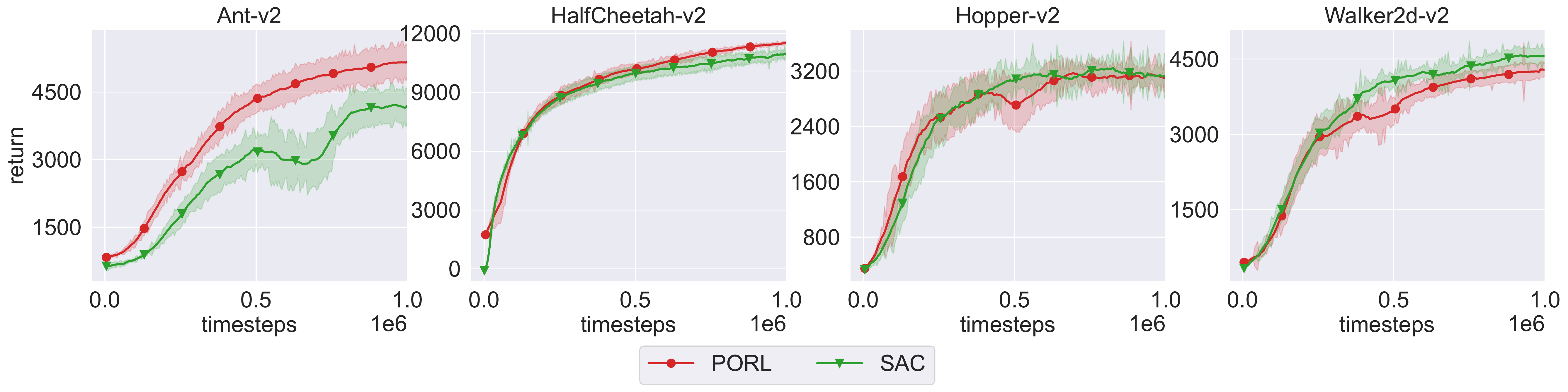}
    \caption{Return curves in $4$ stationary MuJoCo tasks. The curves are shaded with standard errors.}
    \label{fig:stationary_mujoco}
\end{figure*}

\begin{figure*}[!h]
    \centering
    \includegraphics[width=0.97\textwidth]{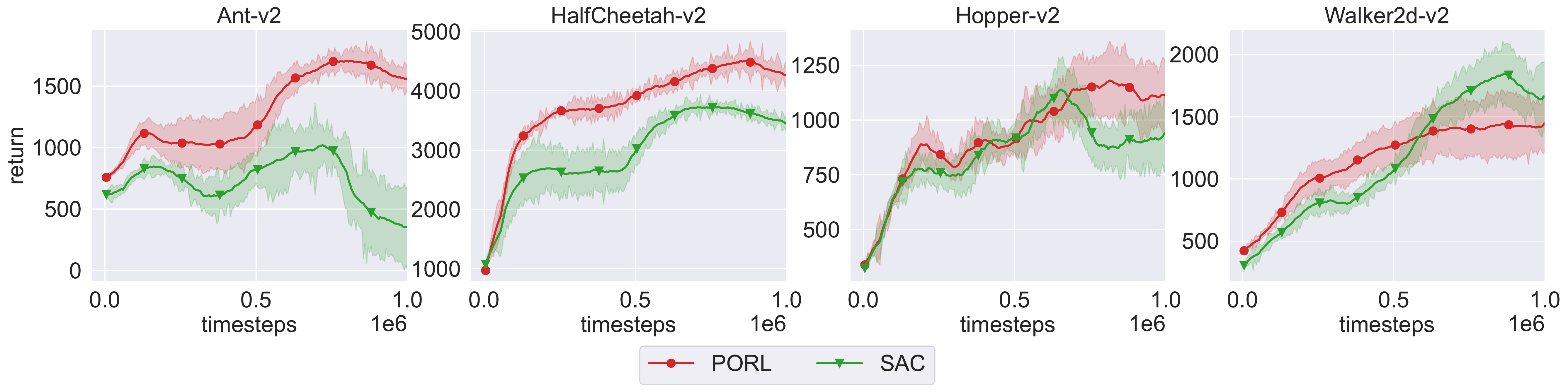}
    \caption{Return curves in $4$ non-stationary MuJoCo tasks The curves are shaded with standard errors. For every $0.125$M training time steps, we change the \texttt{gravity} of the environments.}
    \label{fig:non_stationary_mujoco}
\end{figure*}

\begin{figure*}[!h]
    \centering
    \includegraphics[width=0.97\textwidth]{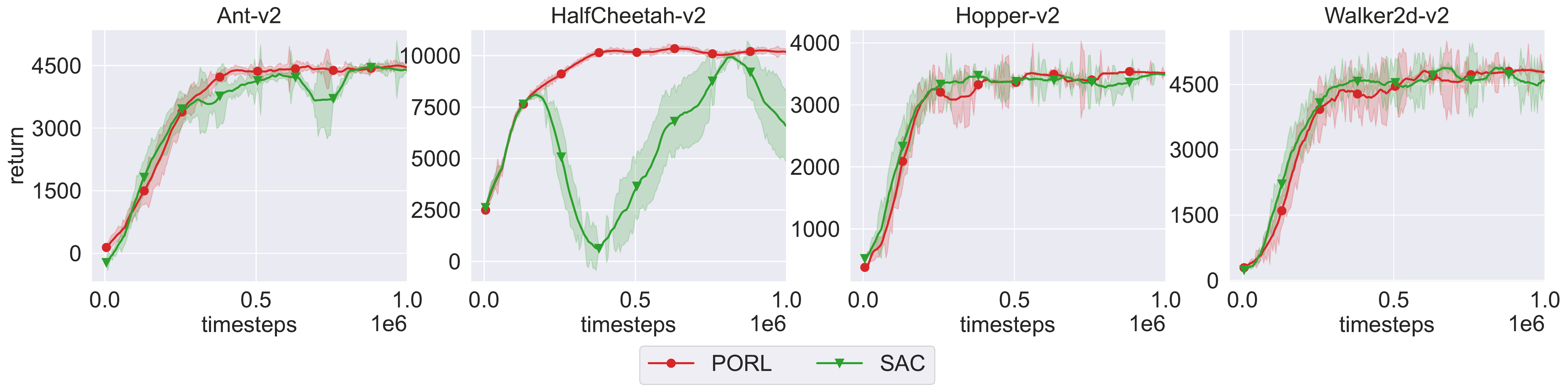}
    \caption{Return curves in $4$ adversarial imitation learning MuJoCo tasks. We replace the policy optimization method in GAIL with PORL and SAC to conduct these comparisons. The curves are shaded with standard errors.}
    \label{fig:gail_mujoco}
\end{figure*}

\section{Experiments}
In this part, we compare PORL with baseline methods in stationary and non-stationary scenarios. We first conducted the experiments in stationary MuJoCo tasks to evaluate the data efficiency of PORL in typical RL benchmarks. Then, we assess the robustness of PORL in three non-stationary scenarios, i.e., dynamical environments, adversarial training environments, and competitive games. Each experiment is conducted with $5$ distinct random seeds unless explicitly mentioned in the text. 

\textbf{Continuous MuJoCo tasks.} First, we evaluate the proposed algorithm on four stationary Gym-MuJoCo tasks. Since these environments are stationary, the KL constraint is relaxed a little ($1/\eta=0.1,\alpha=0.2, T=1000$) in PORL. For SAC, $\alpha$ is auto-tuned as \citep{SAC_auto_alpha}.

We instantiate the policy network and the value networks by MLPs with 2 hidden layers. The hidden units are $[256, 64]$. The activation functions of the hidden layers are \texttt{leaky\_relu}. No activation function is added to the output layer. The policy outputs a mean vector $\mathbf{\mu}$ and a logarithmic standard deviation $\mathbf{\sigma}$ of a diagonal Gaussian distribution $\mathcal{N}(\cdot\mid \mathbf{\mu}, \mathbf{\sigma})$, from which an action vector will be sampled. After squashing the action vector with \texttt{tanh} to make the action falls into the valid compact action space, we will apply the squashed action to the environment. This resembles SAC~\citep{SAC} as the authors showed that squashing can stabilize the training process. We use Adam~\citep{adam.iclr:2015} as the optimizer. The learning rates of the policy network and the value networks are $3e-4$ and $1e-3$, respectively.
% $\eta$ in Eq.~(\ref{param_obj}) is initialized by $0.1$. 
Every policy is trained with $1$M environment steps. 

Figure \ref{fig:stationary_mujoco} shows the performance of PORL in $4$ stationary MuJoCo tasks. We can find that PORL is much better on Ant-v2. The learning process shows few or no sudden performance drops. The performance is competitive to SAC, a SOTA reward maximization algorithm on the other MuJoCo domains. These results imply the potential of PORL for tackling RL problems.

\textbf{Non-stationary environments.} In addition to stationary tasks, we also compare PORL with SAC in non-stationary environments. The hyper-parameters for both algorithms keep exactly the same. We construct $4$ dynamical MuJoCo tasks by changing the parameters (the gravity) of the environment dynamics periodically. Specifically, before training, we opt $8$ values, $\{-1, -2, -5, -10, -15, -20, -25, -30\}$, as the candidate environment parameters, thus each non-stationary task consists of $8$ stationary environments. For every $0.125$M time steps, we change the \texttt{gravity} of the environment to one of the candidate values. Every value is sampled without replacement during the full training process. In Figure~\ref{fig:non_stationary_mujoco}, we report the comparison of PORL and SAC in terms of the averaged return in the $8$ environments. Both PORL and SAC suffer from performance drops because of the non-stationarity. However, we find that the performance drop of SAC is significantly heavier except on Walker2d. 
% Although the performance of PORL is lower than SAC in stationary Walker2d, the performance of PORL is higher than SAC in the non-stationary setting. 
These results reveal the robustness of PORL to the environment non-stationarity, which is benefited from the no-regret and last-iterate convergence properties.

\textbf{Adversarial environments.} We also test PORL in the scenario of adversarial training. Specifically, we compare PORL with SAC in the framework of adversarial imitation learning (AIL), which is a fully adversarial training scenario. The target of AIL is to recover the expert policy from a set of expert data. AIL learns two networks simultaneously, a policy network and a discriminator network. The discriminator is trained to discriminate the expert data from the \textit{fake} data generated by the policy. While the policy (the generator) learns to mimic the expert to fool the discriminator. In each task, we first train a policy by SAC and use the policy at convergence to collect the expert dataset. The performance of the expert data is listed in Table~\ref{tab:expert_performance}. We use GAIL~\citep{gail:16} as the AIL algorithm. GAIL typically uses TRPO~\cite{TRPO:15} or PPO~\cite{PPO:17} as the policy optimization methods. In order to compare the performance of PORL and SAC in the AIL scenario, we replace the policy optimization method in GAIL with PORL and SAC. The hyper-parameter for SAC keeps the same as stationary and non-stationary MuJoCo tasks. Since the dynamics changes faster, we take $\alpha=0.01$, $1/\eta=0.1$ and the gradient updates $T=10$ for adversarial training.

\begin{table}[H]
    \centering
        \caption{Environment return of the expert data.}
    \begin{tabular}{cccc}\toprule
Ant-v2&HalfCheetah-v2&Hopper-v2&Walker2d-v2\\\midrule
4813.04&10545.93&3557.87&3975.29\\
\bottomrule
    \end{tabular}
    \label{tab:expert_performance}
\end{table}

\begin{table}[H]
    \centering
    %\small
\caption{Final return $\pm$ standard errors in 4 stationary MuJoCo tasks in the adversarial imitation learning scenarios. We replace the policy optimization method in original GAIL with PORL and SAC to make these comparisons.}
    \resizebox{0.47\textwidth}{!}{
    \begin{tabular}{l|r@{~$\pm$~}lr@{~$\pm$~}l}\toprule
& \multicolumn{2}{c}{PORL} & \multicolumn{2}{c}{SAC}\\\midrule
Ant-v2 & $ \mathbf{4450.39} $ & $ \mathbf{103.9} $ & $4384.35$& $96.62$\\
HalfCheetah-v2 & $ \mathbf{10179.53} $ & $ \mathbf{177.93} $ & $6575.59$& $1724.85$\\
Hopper-v2 & $ \mathbf{3509.43} $ & $ \mathbf{59.8} $ & $3494.53$& $54.44$\\
Walker2d-v2 & $ \mathbf{4790.49} $ & $ \mathbf{36.75} $ & $4573.75$& $19.14$\\
\bottomrule
    \end{tabular}
    }
    \label{tab:gail_compare}
\end{table}

We present the return curves and the final returns of the policies in Figure~\ref{fig:gail_mujoco} and Table~\ref{tab:gail_compare}. The result reveals that PORL is superior to SAC in adversarial scenarios. From Figure~\ref{fig:gail_mujoco}, it can be observed that as the training proceeds, PORL achieves a more stable performance on Ant and HalfCheetah. This result also discloses that PORL takes the advantage of the no-regret and the last-iterate convergence property, and thus is more robust in adversarial training scenarios. 
% Moreover, the gap between PORL and SAC is smaller than the performance gap in the stationary tasks (Figure~\ref{fig:stationary_mujoco}). This result discloses that PORL takes the advantage of the no-regret framework and thus is robust in adversarial training scenarios. 

\textbf{Remarks on MuJoCo domains.} We follow the same hyper-parameters of SAC for PORL if they have exactly the same functionality. The extra hyper-parameters $\alpha,\eta$, and $T$ are mostly decided based on the theory part and the changing speed of the dynamics. The robustness of SAC in some of the non-stationary MuJoCo environments and adversarial training scenarios is intriguing. Besides, the Eq.\eqref{param_obj} can also be implemented in an on-policy way without changing too much, which resembles the objective of PPO as done in MMD \citep{MMD:22}. However, PORL extends SAC which achieves higher rewards and has shown the potential and benefit of no-regret learning in both stationary and non-stationary environments for continuous action domains.

\textbf{Continuous action space competitive environments.} Finally, we choose two competitive scenarios from the multi-agent particle environments (MPE) \citep{mpe17}, i.e., \texttt{SimplePush} and \texttt{SimpleAdversary}. In \texttt{SimplePush}, the main agent (player 2) is rewarded based on the Euclidean distance to a landmark. The adversary (player 1) is rewarded if it is close to the landmark, and if the agent is far from the landmark. So the adversary learns to push the agent away from the landmark. In \texttt{SimpleAdversary}, there are two landmarks, and the main agent (player 2) is rewarded based on how close one of them is to the target landmark, but negatively rewarded if the adversary (player 1) is close to the target landmark. The adversary is rewarded based on how close it is to the target, but it doesn’t know which is the target landmark. So the main agent has to learn to split up and cover all landmarks to deceive the adversary, which is the most strategic policy among all these agents. The original environment contains a main window of size $[-1,1]\times[-1,1]$ and the agents are allowed to exit the window. The landmarks and the agents are randomly spawned on the main window. We slightly modify the original scenarios to make the two environments more competitive. The number of agents is set as two, and they are restricted to stay in a region $[-2,2]\times[-2,2]$. The environment terminates when one of the agents exits the region or the time steps reach 100. These two scenarios are not strictly two-player zero-sum games, but are fully competitive.

We compare PORL with SAC and NFSP in the competitive games. Instead of fully independent learning, we alternate the learning player every 1000 gradient updates (i.e., $1K$ time steps) and the total steps is $1M$. The policies and value networks are both MLPs with 2 hidden layers, which do not share parameters. The hidden units are $[64,64]$ on two MPE scenarios. Since the main agent and the adversary are not symmetric, we use separate parameters for the two players. The learning rates are $1e-3$ for both networks. We set the policy update interval $T=10$, which better fits MPE tasks. The value of $\alpha$ in SAC is auto-tuned, while we set the initial $\alpha_0=0.01$ and slowly decay $\alpha=\min(0.999*\alpha,0.1\alpha_0)$ every 1000 gradient updates, and $\eta=0.1$ for PORL. We also evaluated $\alpha=0$ for PORL to verify the necessity of a non-zero $\alpha$. We do find the PORL performs a bit worse when $\alpha=0$ (see Appendix \ref{abaltion_study} for details). For NFSP, we replace DQN by SAC as the best response for continuous action space, and anticipatory parameter $\eta=0.2$ which is the best chosen from $\{0.1,0.2,0.5\}$ (see Appendix \ref{abaltion_study} for the full learning curves). The buffer for the average policy is 10 times the size of the replay buffer as done in \citep{nfsp_heinrich:16}. For NFSP, we evaluate the average policy. The learning curves of the best hyper-parameters are shown in Fig.\ref{fig:mpe}. Both PORL and SAC converge stably, while the average policy of NFSP is rather unstable. We then evaluate the converged policies of both algorithms. As the two scenarios are asymmetric, we use a cross-play evaluation protocol. We let player 1 trained by algorithm 1 (\texttt{Alg1}) pit against player 2 trained by algorithm 2 (\texttt{Alg2}) for 200 episodes and record the returns of each player, and repeat the above process by exchanging the players trained from the algorithms. Then the total returns obtained by the same algorithm is the final score. We report the final results in Table.\ref{tab:mpe_pit_score}. PORL can obtain higher scores when pitting against either SAC or NFSP in both scenarios. We notice that the main agent of \texttt{SimpleAdversary} trained by PORL has a larger advantage over the main agents of SAC and NFSP. This reveals that PORL is more likely to deceive the adversaries than SAC and NFSP in competitive games. Detailed scores can be found in Appendix \ref{param_detail}.

\begin{figure}[H]
    \centering
    \includegraphics[width=0.7\textwidth]{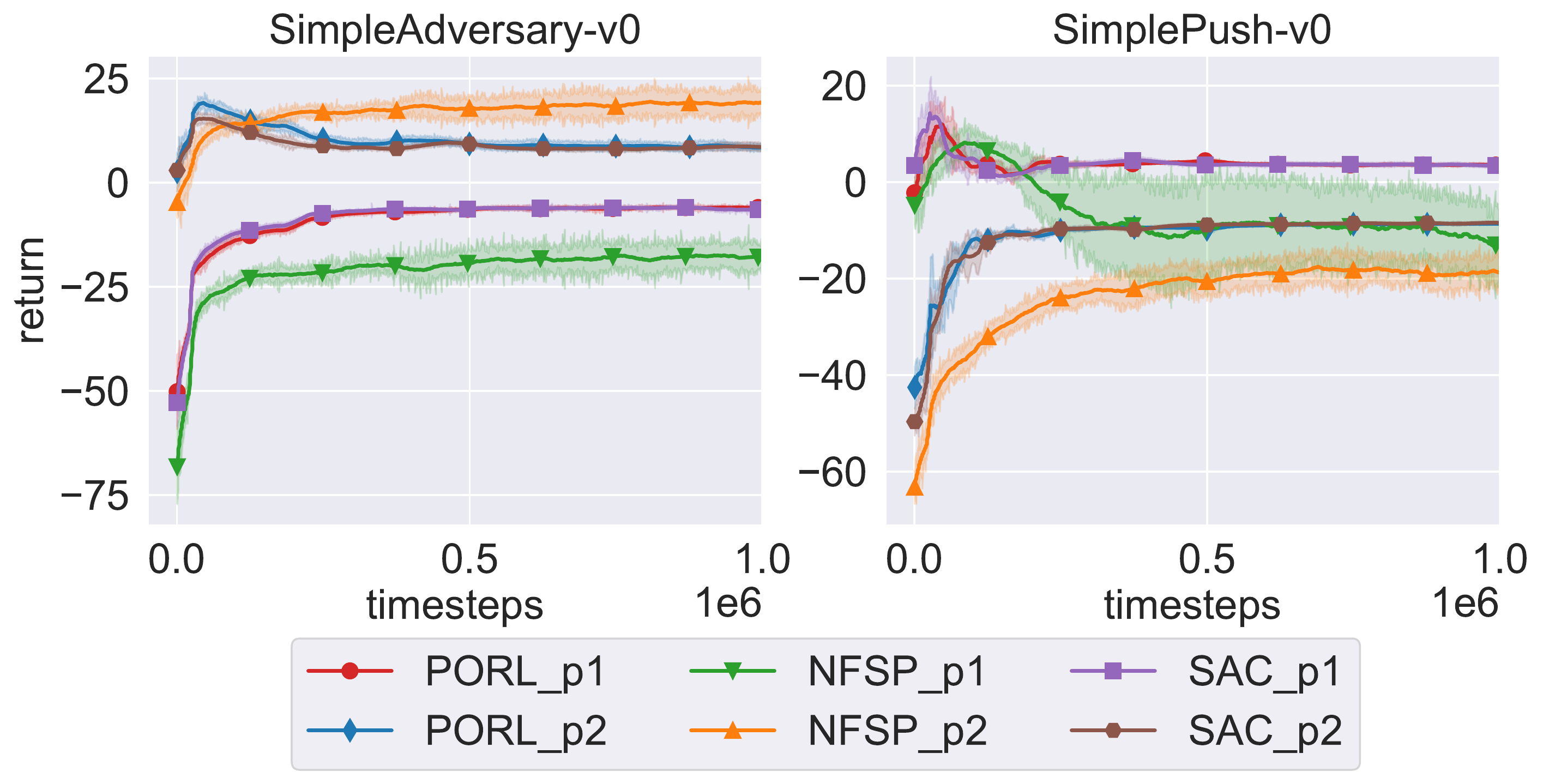}
    \caption{Return curves in $2$ MPE tasks. The returns for player 1 (the adversary) and 2 (the main agent) are plotted respectively. The curves are shaded with standard errors.}
    \label{fig:mpe}
\end{figure}

\begin{table}[H]
    \centering
    \caption{Cross-play returns on MPE environments. Push and Adv stand for \texttt{SimplePush} and \texttt{SimpleAdversary} tasks respectively. In each task, the score $S_{\text{row}, \text{col}}$ denotes the total returns of $\text{Alg}_\text{row}$ pits against $\text{Alg}_\text{col}$. If $S_{\text{row}, \text{col}}>S_{\text{col}, \text{row}}$, then $\text{Alg}_\text{row}$ is better than $\text{Alg}_\text{col}$.} 
\scalebox{0.81}{
\begin{tabular}{l|cccc}
\toprule
Task &  & PORL & NFSP & SAC\\
\midrule
\multirow{3}{1cm}{Push}
& PORL & - & $12.36\pm 4.85$ & $-3.30\pm 0.97$\\
& NFSP & $-49.35\pm 9.75$ & - & $-50.11\pm 10.92$\\
& SAC & $-6.06\pm 1.67$ & $11.64\pm 3.89$ & -\\
\midrule
\multirow{3}{1cm}{Adv}
& PORL & - & $12.86\pm 8.57$  & $5.15\pm 1.29$ \\
& NFSP & $-6.33\pm 6.73$  & - &  $-3.42\pm 8.91$ \\
& SAC  & $0.72 \pm 2.72$  & $11.74\pm 9.38$  & -\\

% & Push & Adversary \\
% \midrule
% PORL & $\mathbf{-3.30\pm 0.97}$ & $\mathbf{5.15\pm 1.29}$ \\
% SAC & $-6.06\pm 1.67$ & $0.72 \pm 2.72$ \\

% PORL & $(\mathbf{4.57\pm 0.55},\mathbf{-8.86\pm0.79})$ & $\mathbf{-6.28\pm 1.23},\mathbf{10.81\pm1.47}$ \\
% SAC & $(4.37\pm1.11,-9.10\pm0.58)$ & $(-6.56\pm0.89,8.65\pm 1.52)$ \\
\bottomrule
\end{tabular}}
    \label{tab:mpe_pit_score}
\end{table}

\begin{figure}[!h]
    \centering
    \includegraphics[width=0.7\textwidth]{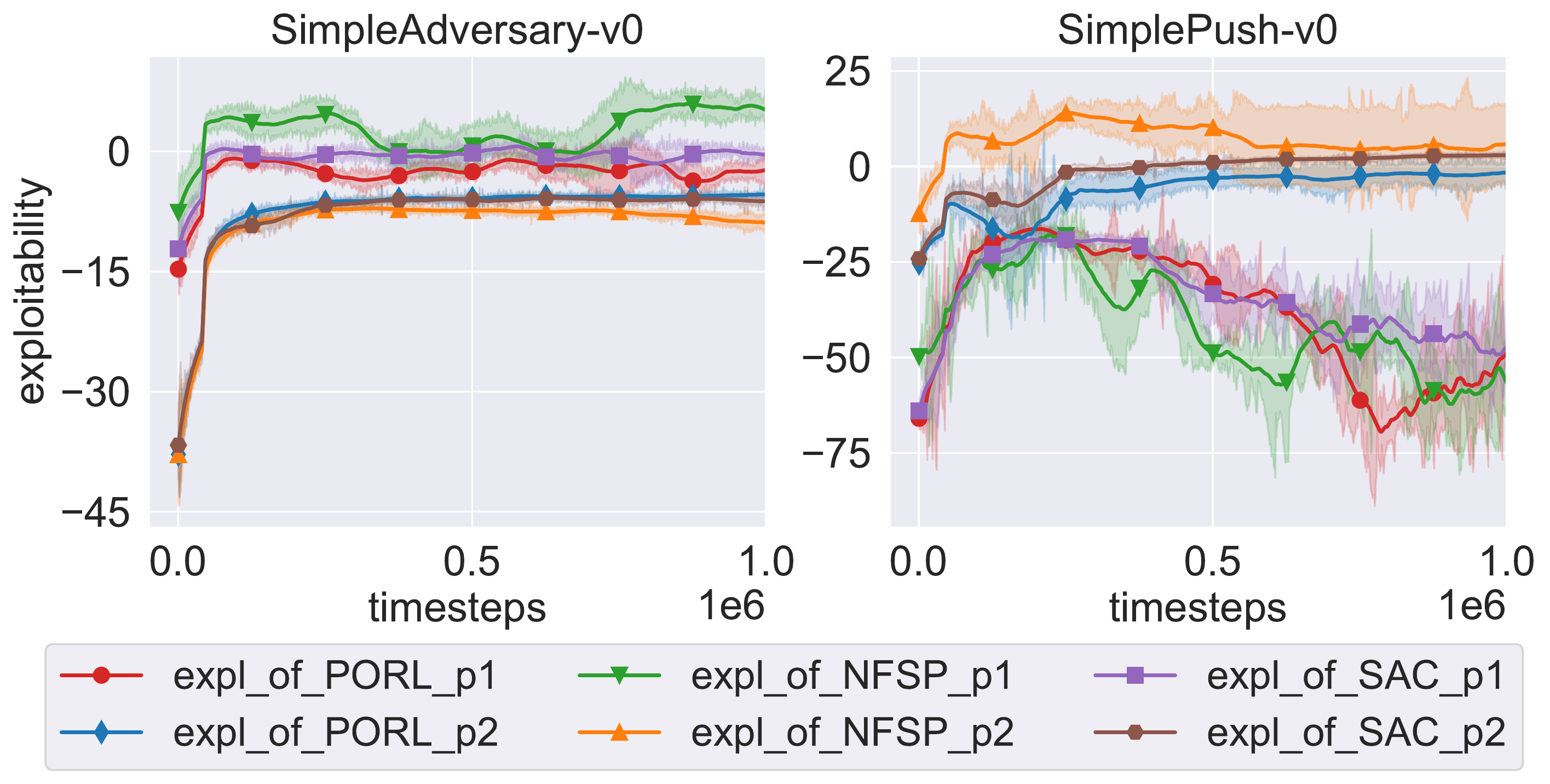}
    \caption{Approximate exploitability learning curves in $2$ multi-agent particle environment competitive tasks. The exploitability of player 1 and 2 are plotted respectively. A lower average exploitability means the corresponding algorithm is more robust against exploiters.}
    \label{fig:expl_mpe}
\end{figure}
We also invoke a SAC as the best response oracle to get the approximate exploitability. However, we observed SAC fails to exploit the trained agents well, especially the main agent on \texttt{SimplePush}. Nevertheless, the learning curves of the exploiter in Fig.\ref{fig:expl_mpe} demonstrate that PORL is harder to be exploited. We argue the slow convergence of NFSP is due to the mixed learning of the best response oracle and the average policy. In NFSP, the main behavior policy is the average policy (with probability $1-\eta$, where $\eta$ should be small). Following the practical observations in \citep{BCQ}, if the behavior policy and the learning policy are very different, off-policy RL algorithms cannot efficiently learn the optimal policy, thus samples for the average policy are not from the real BR.

\noindent{\textbf{Ablation studies.}} We ablate two hyper-parameters in PORL, i.e., the gradient steps for each iteration ($T$) and $\eta$. We evaluate different hyper-parameters in non-stationary HalfCheetah-v2. We first fix $\eta$ and vary $T$ to different values. We report the final return of the policy trained with $1$M environment interactions in Table~\ref{tab:ablation_t} in Appendix \ref{abaltion_study}. Each experiment is repeated with $3$ seeds. We can find the performance will benefit from a smaller $T$. The smallest $T$ ($10$) obtains the highest final return. This result originates from the fact that $\log \pi_\theta(a)-\log \pi_t(a)$ in Eq.\eqref{param_obj} will be large when $\pi_t(a)$ is very old. A larger discrepancy could introduce instability in sample-based policy learning. We then fix $T$ and vary $\eta$. The result is summarized in Table~\ref{tab:ablation_eta} in Appendix \ref{abaltion_study}. The experiments are also repeated with $3$ seeds. We find that PORL is relatively robust to $\eta$, and $1/\eta=0.1$ performs well in all the experiments.

\section{Conclusion}
In this paper, we propose a practical no-regret style RL algorithm, PORL, to solve stationary MDP, dynamic MDP, and adversarial training, in continuous action space. We show PORL can extend to handle continuous action space and has the last-iterate convergence property. The experimental results demonstrate the proposed PORL is superior (at least competitive) to SAC and more robust in stationary, dynamical environments and adversarial training scenarios. PORL is superior to SAC and NFSP in competitive games.

We currently used a small and fixed $\alpha$ for the regularized MDP which only converges to the QRE. In the future, we will investigate the vanishing version. Besides, the regularization term is limited to the entropy, which has some nice properties in implementation. Other regularizers may also be available.

\bibliographystyle{abbrvnat}
\bibliography{deeprm}

\setcounter{secnumdepth}{1}
\appendix
\onecolumn
\section{Recap of Notations}
In a compact space $A$, let $\int_{\mathrm{d}\mathbf{a}}f(\mathbf{a})=\int_{\mathbf{a}\in A}f(\mathbf{a})\mathrm{d}\mathbf{\mu(a)}$ for short, where $\mu$ is the Lebesgue measure and $\int {\mathrm{d}\mathbf{a}}=\text{vol}(A)$ is the volume of the compact space $A$. We will use $\mathrm{d}a$ for simplicity if there is no ambiguity. The inner product of two functions $f,g$ is defined as $\langle f,g \rangle=\int_{\mathrm{d}a}f(a)g(a)$. If $f,g$ are vectors, then $\langle f,g \rangle$ is the inner product of vectors. In the following, $\Vert f \Vert_p=(\int_{\mathrm{d}a}\vert f(a)\vert^p)^\frac{1}{p}$ is the $L_p$ norm of a function $f$. By default, $\Vert f\Vert$ is the $L_2$ norm. The (differential) entropy of a density function $p$ is $\mathcal{H}(p)=-\int_{a}p(a)\log p(a)\mathrm{d}a$ and Kullback-Leibler divergence $KL(p,q)=\int_a p(a)\log\frac{p(a)}{q(a)}\mathrm{d}a$.

\section{Calculus of Variation and Euler-Lagrange Equation}
We consider the following integral functional, where the argument $x$ can be a real number or a vector, i.e., $x\in R^n$,
$$F(y)=\int_{A}L(x,y(x), y^\prime(x))\mathrm{d}x.$$
For calculus of variation, the variation of $y(x)$ is often denoted as $\eta(x)$. The first variation of the functional $F$ is $\delta F(\eta,y)=\int \frac{\delta F}{\delta y(x)}\eta(x)\mathrm{d}x$, where $\frac{\delta F}{\delta y(x)}$ is the functional derivative. 
The above integral functional has functional derivative (Euler-Lagrange Equation) given by
$$\frac{\delta F}{\delta y(x)}=\frac{\partial L}{\partial y}-\sum_{i=1}^n\frac{\partial }{\partial x_i}\frac{\partial L}{\partial y_{x_i}}$$
By setting the Euler-Lagrange equation to zero, we can find an extrema of $F(y)$. Also, we can add a Lagrange multiplier to cope with the following form of constraints
$$G(y)=\int_{A}M(x,y(x), y^\prime(x))\mathrm{d}x=C.$$
We can solve the constrained problem by building a new functional in the same vein of the method of Lagrange multipliers:
$$H(y)=\int_{A}(L(x,y,y^\prime)-\lambda M(x,y,y^\prime))\mathrm{d}x.$$

For example, consider the following functional $L$ with the constraint that $g$ is a density function, i.e., $\int_{x\in A}g(x)\mathrm{d}x=1$ and $\forall x\in A, g(x)\ge 0$
$$
L(g)=\int_{x\in A} g(x)(-u(x)+\frac{\log g(x)}{\eta})\mathrm{d}x
$$
We can rewrite the above functional with constraint as follows:
\begin{equation}
F(g)=\int_{x\in A} g(x)(-u(x)+\frac{\log g(x)}{\eta}-\lambda)\mathrm{d}x \label{exam_Ifunctional}
\end{equation}
% If we aim to minimize 
Noticed this functional does not depend on any partial derivative of $g(x)$, the functional derivative is:
\begin{equation}
\frac{\delta F}{\delta g}=(-u(x)+\frac{\log g(x)+1}{\eta}-\lambda). \label{exam_for_CV}
\end{equation}

\section{Finding the Extrema of a Functional by the Euler-Lagrange Equation}
The necessary condition for $y$ being a minimum of a functional is that the Euler-Lagrange equation is 0 and the second variation $$\delta^2 F(\eta,y)=\int [\frac{\partial^2 F}{\partial y^2}\eta^2(x)+2\frac{\partial^2 F}{\partial y\partial y^\prime}\eta(x)\eta^\prime(x)+\frac{\partial^2 F}{\partial(y^\prime)^2}(\eta^\prime(x))^2]\mathrm{d}x\ge0, \forall \eta(x).$$
For instance, let the Euler-Lagrange equation of Eq.\eqref{exam_Ifunctional} be 0, we can get 
$$g(x)=\exp(\eta\cdot(u(x)+\lambda)) \text{ and } \frac{\partial^2 F}{\partial g^2}=\frac{1}{\eta g(x)}>0.$$
Note that Eq.\eqref{exam_Ifunctional} does not depend on $g^\prime(x)$, thus the second variation $\delta^2 F(\eta,g)=\int \frac{\partial^2 F}{\partial g^2}\eta^2(x)\mathrm{d}x=\int \frac{\eta^2(x)}{\eta^2 g^2(x)}\mathrm{d}x\ge 0$. So $g(x)=\exp(\eta\cdot(u(x)+\lambda))$ is a (local) minimum. From the constraint that $\int g(x)\mathrm{d}x=1$, we can remove $\lambda$ and obtain $$g(x)=\frac{\exp(\eta \cdot u(x))}{\int \exp(\eta\cdot u(x))\mathrm{d}x}.$$
Similarly, it can be easily verified that $g(x)=\frac{\exp(\eta \cdot u(x))}{\int \exp(\eta\cdot u(x))\mathrm{d}x}$ is a maximum of 
$$L(g)=\int_{x\in A}g(x)(u(x)-\frac{\log g(x)}{\eta})\mathrm{d}x$$ where $g(x)$ is a density function.

% For simplicity, we ignore the argument of $Q(s,a)$ and $\pi(a|s)$.
% By simple calculation, we have	
% \begin{align}
% \pi^{T} &= \frac{\exp(\eta(\sum_{t=1}^{T-1}Q^t+\hat{Q}^T))}{\int_{\mathrm{d}a}\exp(\eta(\sum_{t=1}^{T-1}Q^t+\hat{Q}^T))} \\
% &=\frac{\exp(\eta(\sum_{t=1}^{T-2}Q^t+\hat{Q}^{T-1}+Q^{T-1}+\hat{Q}^T-\hat{Q}^{T-1}))}{\int_{\mathrm{d}a}\exp(\eta(\sum_{t=1}^{T-2}Q^t+\hat{Q}^{T-1}+Q^{T-1}+\hat{Q}^T-\hat{Q}^{T-1}))}\\
% &=\frac{\exp(\eta(\sum_{t=1}^{T-2} Q^t+\hat{Q}^{T-1}))\exp(\eta(Q^{T-1}+\hat{Q}^T-\hat{Q}^{T-1})}{\int_{\mathrm{d}a}\exp(\eta(\sum_{t=1}^{T-2} Q^t+\hat{Q}^{T-1}))\exp(\eta(Q^{T-1}+\hat{Q}^T-\hat{Q}^{T-1})} \\
% &=\frac{\frac{\exp(\eta(\sum_{t=1}^{T-2} Q^t+\hat{Q}^{T-1}))}{\int_{\mathrm{d}a}\exp(\eta(\sum_{t=1}^{T-2} Q^t+\hat{Q}^{T-1}))}\exp(\eta(Q^{T-1}+\hat{Q}^T-\hat{Q}^{T-1})}{\int_{\mathrm{d}a}\frac{\exp(\eta(\sum_{t=1}^{T-2} Q^t+\hat{Q}^{T-1}))}{\int_{\mathrm{d}a}\exp(\eta(\sum_{t=1}^{T-2} Q^t+\hat{Q}^{T-1}))}\exp(\eta(Q^{T-1}+\hat{Q}^T-\hat{Q}^{T-1})} \\
% &=\frac{\pi^{T-1}\exp(\eta(Q^{T-1}+\hat{Q}^{T}-\hat{Q}^{T-1}))}{\int_{\mathrm{d}a}\pi^{T-1}\exp(\eta(Q^{T-1}+\hat{Q}^{T}-\hat{Q}^{T-1}))} \\
% &=\frac{\pi^{T-1}\exp(\eta(Q^{T-1}+\hat{Q}^{T}-\hat{Q}^{T-1}))}{Z} \label{original_target}
% \end{align}
% If we use an optimistic term and let $\hat{Q}^{T}=Q^{T-1}$, then Eq.\eqref{original_target} becomes $$\pi^T=\frac{\pi^{T-1}\exp(\eta(2Q^{T-1}-Q^{T-2}))}{Z}.$$
% If we do not use the optimistic term, Eq.\eqref{original_target} reduces to $$\pi^T=\frac{\pi^{T-1}\exp(\eta(Q^{T-1}))}{Z}.$$

\section{Proof Details}
\label{detailed_proof}

\subsection{Proof of Theorem \ref{vanish_kl}}

\begin{lemma} \label{kl_l2_ineq}
Given two density functions $p,q$ on a compact set $\mathcal{X}$, where we assume that $0<a\le p(x)\le b$ and $0<a\le q(x)\le b$, then 
\begin{equation}
KL(p,q)\ge \frac{1}{2b}\cdot \Vert p(x)-q(x)\Vert^2_2=\frac{1}{2b}\int_{x} (p(x)-q(x))^2\mathrm{d}x
\end{equation}
\end{lemma}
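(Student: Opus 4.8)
The plan is to reduce the claim to an elementary pointwise inequality between the integrand of the KL divergence and that of the squared $L_2$ distance; this is a reverse-Pinsker type estimate, and the bounds $a,b$ will enter only through the denominator. First I would rewrite the divergence using that $p$ and $q$ both integrate to $1$ over $\mathcal{X}$. Since $\int_{\mathcal{X}}(p(x)-q(x))\,\mathrm{d}x=0$,
\begin{equation}
KL(p,q)=\int_{\mathcal{X}}\Bigl[p(x)\log\tfrac{p(x)}{q(x)}-p(x)+q(x)\Bigr]\mathrm{d}x=\int_{\mathcal{X}}q(x)\,\psi\!\Bigl(\tfrac{p(x)}{q(x)}\Bigr)\mathrm{d}x,\nonumber
\end{equation}
where $\psi(t)=t\log t-t+1$. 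The assumption $0<a\le p,q\le b$ guarantees that the ratio $p(x)/q(x)$ lies in the compact interval $[a/b,\,b/a]\subset(0,\infty)$, so every expression above is finite and well defined.

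Next I would establish the pointwise bound $\psi(t)\ge\dfrac{(t-1)^2}{2\max(1,t)}$ for all $t>0$. Since $\psi(1)=\psi'(1)=0$ and $\psi''(t)=1/t$, Taylor's theorem with Lagrange remainder gives $\psi(t)=\tfrac{1}{2\xi}(t-1)^2$ for some $\xi$ strictly between $1$ and $t$ (the case $t=1$ being trivial). When $t>1$ we have $\xi<t=\max(1,t)$, hence $\psi(t)\ge(t-1)^2/(2t)$; when $t<1$ we have $\xi<1=\max(1,t)$, hence $\psi(t)\ge(t-1)^2/2$. In both cases the stated inequality holds.

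Finally I would substitute $t=p(x)/q(x)$ and multiply by $q(x)$:
\begin{equation}
q(x)\,\psi\!\Bigl(\tfrac{p(x)}{q(x)}\Bigr)\ge\frac{q(x)\bigl(\tfrac{p(x)}{q(x)}-1\bigr)^2}{2\max\bigl(1,\tfrac{p(x)}{q(x)}\bigr)}=\frac{(p(x)-q(x))^2}{2\max(q(x),p(x))}\ge\frac{(p(x)-q(x))^2}{2b},\nonumber
\end{equation}
the last step using $p(x)\le b$ and $q(x)\le b$. Integrating over $\mathcal{X}$ and combining with the first display yields $KL(p,q)\ge\frac{1}{2b}\int_{\mathcal{X}}(p(x)-q(x))^2\,\mathrm{d}x=\frac{1}{2b}\Vert p-q\Vert_2^2$, which is the claim.

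There is no substantive obstacle here; the only step requiring care is the pointwise lemma, and in particular obtaining the factor $\max(1,t)$ in the denominator rather than a cruder bound — this is precisely what makes the final constant depend on the upper bound $b$ and not on $a$. The hypothesis $a>0$ is used only to keep $KL(p,q)$ finite and the ratio $p/q$ bounded away from $0$ and $\infty$; it does not appear in the constant.
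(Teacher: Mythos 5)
Your proof is correct, and it follows the same overall skeleton as the paper's (a reverse-Pinsker-style pointwise bound on $\psi(t)=t\log t-t+1$, substitution of $t=p(x)/q(x)$, multiplication by $q$, integration, and a bound on the denominator via $b$), but the key pointwise lemma and its proof are different. You establish $\psi(t)\ge\frac{(t-1)^2}{2\max(1,t)}$ by Taylor's theorem with Lagrange remainder, which after substitution gives the denominator $2\max(p(x),q(x))\le 2b$; the paper instead proves the inequality $\psi(t)\ge\frac{3(t-1)^2}{2(t+2)}$ by introducing the auxiliary function $h(t)=2(2+t)\psi(t)-3(t-1)^2$ and running a convexity/monotonicity analysis, which leads to the denominator $p(x)+2q(x)\le 3b$. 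Both routes yield exactly the constant $\frac{1}{2b}$. Your Taylor-remainder argument is shorter and more elementary than the paper's ad hoc construction of $h$; the paper's pointwise inequality is the sharper one (indeed $\frac{3}{p+2q}\ge\frac{1}{\max(p,q)}$ pointwise), but that extra sharpness is not exploited, since both are ultimately relaxed to the uniform bound $b$. One small remark: your explicit use of $\int_{\mathcal{X}}(p-q)\,\mathrm{d}x=0$ to rewrite $KL(p,q)=\int q\,\psi(p/q)\,\mathrm{d}x$ is the same normalization step the paper performs implicitly when it integrates $p\log\frac{p}{q}-p+q$, so nothing is missing on either side.
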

\begin{proof}
Let $h(t)=2(2+t)(t\log t-t+1)-3(t-1)^2,t>0.$ We have 
$$h^\prime(t)=2(t\log t-t+1)+2(2+t)\log t-6(t-1)=4\big((1+t)\log t-2(t-1)\big),h^{\prime\prime}=4(\frac{1}{t}+\log t-1).$$
Notice that $h^{\prime\prime}(1)=0$ which is the minimum of $h^{\prime\prime}$, thus $h(t)$ is convex. Also notice that $h^\prime(1)=0$, so $h(t)\ge h(1)=0$. We have that
\begin{equation}
t\log t-t+1\ge\frac{3(t-1)^2}{2(2+t)}. \label{aux_ineq}    
\end{equation}
Taking $t(x)=\frac{p(x)}{q(x)}>0$ into Eq.\eqref{aux_ineq}, we get
\begin{equation}
    \frac{p(x)}{q(x)}\log\frac{p(x)}{q(x)}-\frac{p(x)}{q(x)}+1\ge \frac{3(\frac{p(x)}{q(x)}-1)^2}{2(2+\frac{p(x)}{q(x)})}.
\end{equation}
Multiplying by $q(x)$ and taking integral on both sides, we have
\begin{equation}
    \int_{x\in\mathcal{X}} \big(p(x)\log\frac{p(x)}{q(x)}-p(x)+q(x)\big)\mathrm{d}x=KL(p,q)\ge \int_{x\in\mathcal{X}} \frac{3(p(x)-q(x))^2}{2(p(x)+2q(x))}\mathrm{d}x.
\end{equation}
By the assumption that $p(x)$ and $q(x)$ are bounded, we have $p(x)+2q(x)\le 3b$. So we get
\begin{equation}
    KL(p,q)\ge \frac{1}{2b}\cdot\int_{x\in\mathcal{X}} (p(x)-q(x))^2\mathrm{d}x.
\end{equation}
\end{proof}

% \begin{lemma} \label{convex_of_d_ent}
% Given two probability density $p,q$, for any $\lambda\in[0,1]$, the differential entropy of the mixture density $\bar{p}=\lambda p+(1-\lambda) q$ has the following inequality:
% \begin{equation}
%     \mathcal{H}(\bar{p})\ge \lambda\mathcal{H}(p)+(1-\lambda)\mathcal{H}(q)
% \end{equation}.
% \end{lemma}

% \begin{proof}
% Let $Y\sim \mathtt{B}(\lambda)$ for some $\lambda\ge 0$, and let 
% \begin{eqnarray}
% \bar{p}=\left\{
% \begin{aligned}
% &p, Y=0\\
% &q, Y=1
% \end{aligned}
% \right.
% \end{eqnarray}
% By the fact that the mutual information is always non-negative for both discrete and continuous random variables, i.e., $I(P;Q)=\mathcal{H}(P)- \mathcal{H}(P|Q)\ge0$, we have
% \begin{align}
%     \mathcal{H}(\bar{p})&\ge \mathcal{H}(\bar{p}|Y=0)Pr[Y=0]+\mathcal{H}(\bar{p}|Y=1)Pr[Y=1]\\
%     &=\lambda \mathcal{H}(p)+(1-\lambda)\mathcal{H}(q)
% \end{align}
% \end{proof}
% Note the above Lemma can easily extend to Kullback-Leibler divergence.
% \begin{corol}\label{convex_of_KL}
% Given two probability density $p,q$, for any $\lambda\in[0,1]$, and the mixture density $\bar{p}=\lambda p+(1-\lambda) q$, we have
% \begin{equation}
%     KL(\bar{p})\ge \lambda KL(p)+(1-\lambda)KL(q).
% \end{equation}
% \end{corol}

Then we proceed the proof of Theorem~\ref{vanish_kl}.
\begin{proof}
[\textbf{Proof} of Theorem \ref{vanish_kl}]
When we use the $\max\min$ objective, we often assume the game is zero-sum, i.e., $Q^1(a_1,a_2)=Q(a_1,a_2)=-Q^2(a_1,a_2)$. Equivalently, both players maximize their own value function.
Let $Q_t(a)=Q^{\pi_{1,t},\pi_{2,t}}(a)=(Q^1_t(a_1),-Q^2_t(a_2))$ for short, where
$$
Q^1_t(a_1)=\int \pi_{2,t}(a_2)Q(a_1,a_2)\mathrm{d}a_2 \text{ and } Q^2_t(a_2)=\int \pi_{1,t}(a_1)Q(a_1,a_2)\mathrm{d}a_1.
$$
% Let $Q_t=Q^{\pi_{1,t},\pi_{2,t}}=\int\int \pi_{1,t}(a_1)\pi_{2,t}(a_2)Q(a_1,a_2)\mathrm{d}a_1\mathrm{d}a_2$ for short and let
Thus, we consider the joint policy $\pi_t=(\pi_{1,t},\pi_{2,t})$ optimization in the main text. That is, 
\begin{align}
&\langle \pi(a), Q_t(a)\rangle=\int \pi_1(a_1)Q^1_t(a_1)\mathrm{d}a_1 +\int \pi_2(a_2)(-Q^2_t(a_2))\mathrm{d}a_2=\int \pi_1(a_1)Q^1_t(a_1)\mathrm{d}a_1 -\int \pi_2(a_2)Q^2_t(a_2)\mathrm{d}a_2, \nonumber\\
&=\int \pi_1(a_1)\int \pi_{2,t}(a_2)Q(a_1,a_2)\mathrm{d}a_1\mathrm{d}a_2 -\int \pi_2(a_2)\int \pi_{1,t}(a_1)Q(a_1,a_2)\mathrm{d}a_1\mathrm{d}a_2\\
&\int_{a}\pi(a)\log\pi(a)\mathrm{d}a=\int\pi_1(a_1)\log\pi_1(a_1)\mathrm{d}a_1+\int\pi_2(a)\log\pi_2(a)\mathrm{d}a_2,\\
&KL(\pi,\pi_t)=KL(\pi_1,\pi_{1,t})+KL(\pi_2,\pi_{2,t}).
\end{align}
Denote 
\begin{align}
G(\pi)&=\langle\pi(a),Q(a)\rangle-\alpha\int\pi(a)\log\pi(a)\mathrm{d}a,\\
F(\pi)&=\langle\pi(a),Q_t(a)\rangle-\alpha{\int_{a}\pi(a)\log\pi(a)\mathrm{d}a}-\frac{1}{\eta}\int_a\pi(a)\log\frac{\pi(a)}{\pi_{t}(a)}\mathrm{d}a.
\end{align}
The \emph{soft} optimal joint policy 
\begin{align}
\pi^\star=(\pi^{\star}_1,\pi^{\star}_2)=\arg\max_{\pi^1}\arg\min_{\pi^2}G(\pi^1,\pi^2).
% &=\arg\min_{\pi_2}\arg\max_{\pi_1}\langle\pi(a),Q(a)\rangle-\alpha\int\pi_1(a_1)\log\pi_1(a_1)\mathrm{d}a_1+\alpha\int\pi_2(a_2)\log\pi_2(a_2)\mathrm{d}a_2.
\end{align}
Recall that $\pi_{t+1}=\arg\max_{\pi}F_t(\pi).$
% Based on Lemma \ref{convex_of_d_ent} and Corollary \ref{convex_of_KL}, it is easy to verify that
% \begin{eqnarray}
% \forall \lambda\in[0,1], F_t(\pi+\lambda(\pi^\prime-\pi)) \ge (1-\lambda)F_t(\pi)+\lambda F_t(\pi^\prime),\\
% G(\pi+\lambda(\pi^\prime-\pi))\ge (1-\lambda)G(\pi)+\lambda G(\pi^\prime).
% \end{eqnarray}
% Besides, we can obtain that
% \begin{equation}
% \frac{G(\pi+\lambda(\pi^\prime-\pi))-G(\pi)}{\lambda}\ge G(\pi^\prime)-G(\pi)\overset{\lambda\rightarrow 0}{\Rightarrow}\frac{\delta G}{\delta \pi}\cdot(\pi^\prime-\pi)\ge G(\pi^\prime)-G(\pi).
% \end{equation}
% Exchanging $\pi^\prime,\pi$ and Applying the above inequality again, we have
% $$\frac{\delta G}{\delta \pi^\prime}\cdot(\pi^\prime-\pi)\le G(\pi^\prime)-G(\pi).$$
% So we have 
% \begin{equation}
% (\frac{\delta G}{\delta \pi^\prime}-\frac{\delta G}{\delta \pi})(\pi^\prime-\pi)\le0,
% \end{equation}
% which is akin to the \textit{monotone} property for a function.
By the first-order optimality condition, we have
\begin{align}
\forall\pi,&-\nabla F(\pi_{t+1})(\pi-\pi_{t+1})\ge0\Leftrightarrow\\
&\langle Q_t(a)-\alpha\cdot\log\pi_{t+1}(a)-\frac{1}{\eta}(\log\frac{\pi_{t+1}(a)}{\pi_t(a)}+1),\pi-\pi_{t+1}\rangle\le0\Leftrightarrow\\
&\langle Q_t(a)-\alpha\cdot\log\pi_{t+1}(a),\pi-\pi_{t+1}\rangle\le \frac{1}{\eta}\langle\log\frac{\pi_{t+1}(a)}{\pi_t(a)},\pi-\pi_{t+1}\rangle
\end{align}
We can also decompose the KL as 
\begin{equation} \label{kl_decompose}
    KL(\bar{z},y)-KL(z,y)+KL(z,\bar{z})=\langle\log\bar{z}-\log y, \bar{z}-z\rangle.
\end{equation}
By Eq.\eqref{kl_decompose}, we have
\begin{eqnarray}
\langle Q_t(a)-\alpha\cdot\log\pi_{t+1}(a),\pi^\star-\pi_{t+1}\rangle\le \frac{1}{\eta}(KL(\pi^\star,\pi_{t})-KL(\pi_{t+1},\pi_{t})-KL(\pi^\star,\pi_{t+1}))   . 
\end{eqnarray}
Rearrange the above, we can get 
\begin{align}
KL(\pi^*,\pi_{t+1}) &\le KL(\pi^*,\pi_t)-KL(\pi_{t+1},\pi_t)-\eta\langle Q_t(a)-\alpha\cdot\log\pi_{t+1}(a),\pi^\star-\pi_{t+1}\rangle\Leftrightarrow\\
KL(\pi^*,\pi_{t+1})&\le KL(\pi^*,\pi_t)-KL(\pi_{t+1},\pi_t)+\eta\langle Q_t(a)-\alpha\cdot\log\pi_{t+1}(a),\pi_{t+1}-\pi^\star\rangle\\
&=KL(\pi^*,\pi_t)-KL(\pi_{t+1},\pi_t)+\eta\langle Q_t(a)-Q_{t+1}(a),\pi_{t+1}-\pi^\star \rangle+\eta\langle Q_{t+1}(a)-\alpha\log\pi_{t+1}(a), \pi_{t+1}-\pi^\star\rangle \label{recur_phase1_last}
\end{align}
Taking the last term $\eta\langle Q_{t+1}(a)-\alpha\log\pi_{t+1}(a), \pi_{t+1}-\pi^\star\rangle$ and denoting $\tilde{Q}^\star=(\langle Q,\pi^{\star}_2\rangle,-\langle Q, \pi^\star_1\rangle)$, we have 
\begin{align}
&\eta\langle Q_{t+1}(a)-\alpha\log\pi_{t+1}(a), \pi_{t+1}-\pi^\star\rangle\\
&=\eta\underbrace{\langle Q_{t+1}-\tilde{Q}^\star,\pi_{t+1}-\pi^\star\rangle}_{=0, \text{ by definition}} + \eta\alpha\langle\log\pi^\star-\log\pi_{t+1},\pi_{t+1}-\pi^\star\rangle + \underbrace{ \eta\langle \tilde{Q}^\star-\alpha\log\pi^\star,\pi_{t+1}-\pi^\star\rangle}_{\le 0, \text{by the optimality of }\pi^\star}\\
&\le \eta\alpha\langle\log\pi^\star-\log\pi_{t+1},\pi_{t+1}-\pi^\star\rangle\\
&=-\eta\alpha (KL(\pi^\star,\pi_{t+1})+KL(\pi_{t+1},\pi^\star)).
\end{align}
Plugging back to Eq.\eqref{recur_phase1_last}, we get
\begin{align}
KL(\pi^*,\pi_{t+1})&\le KL(\pi^*,\pi_t)-KL(\pi_{t+1},\pi_t)+\eta\langle Q_t(a)-Q_{t+1}(a),\pi_{t+1}-\pi^\star \rangle-\eta\alpha
(KL(\pi^\star,\pi_{t+1})+KL(\pi_{t+1},\pi^\star)).
\end{align}
By Cauchy-Schwarz inequality for functions, $\langle Q_t(a)-Q_{t+1}(a),\pi_{t+1}-\pi^\star \rangle\le \Vert Q_t-Q_{t+1}\Vert_2\Vert\pi_{t+1}-\pi^\star\Vert_2$, we have
\begin{align}
KL(\pi^*,\pi_{t+1})&\le KL(\pi^\star,\pi_t)-KL(\pi_{t+1},\pi_t)+\eta\Vert Q_t-Q_{t+1}\Vert_2\Vert\pi_{t+1}-\pi^\star\Vert_2-\eta\alpha (KL(\pi^\star,\pi_{t+1})+KL(\pi_{t+1},\pi^\star))\\
&\le KL(\pi^*,\pi_t)-KL(\pi_{t+1},\pi_t)+\eta\underbrace{L\Vert \pi_t-\pi_{t+1}\Vert_2}_{L\le\frac{R_{\max}}{1-\gamma}}\Vert\pi_{t+1}-\pi^\star\Vert_2 -\eta\alpha (KL(\pi^\star,\pi_{t+1})+KL(\pi_{t+1},\pi^\star))\label{before_elem_ineq}\\
&\le KL(\pi^*,\pi_t)-KL(\pi_{t+1},\pi_t)+\frac{\eta^{1/2}\Vert \pi_t-\pi_{t+1}\Vert_2^2}{2}+ \frac{\eta^{3/2} L^2\Vert\pi_{t+1}-\pi^\star\Vert_2^2}{2}\nonumber\\
&\quad-\eta\alpha (KL(\pi^\star,\pi_{t+1})+KL(\pi_{t+1},\pi^\star)) \label{after_elem_ineq}\\
&\overset{\text{by Lemma \ref{kl_l2_ineq}}}{\le} KL(\pi^*,\pi_t)-KL(\pi_{t+1},\pi_t)+\eta^{1/2}b\cdot KL(\pi_{t+1},\pi_t)+ \eta^{3/2}L^2 b\cdot KL(\pi_{t+1},\pi^\star)\nonumber\\ 
&\quad -\eta\alpha (KL(\pi^\star,\pi_{t+1})+KL(\pi_{t+1},\pi^\star))
\end{align}
% \begin{align}
% KL(\pi^*,\pi_{t+1}) & \le KL(\pi^*,\pi_t)-KL(\pi_{t+1},\pi_t)+\eta\langle Q_t(a)-Q_{t+1}(a),\pi_{t+1}-\pi^\star \rangle-\eta\alpha (KL(\pi^\star,\pi_{t+1})+KL(\pi_{t+1},\pi^\star))\\
% & \le KL(\pi^\star,\pi_t)-KL(\pi_{t+1},\pi_t)+\eta\Vert Q_t-Q_{t+1}\Vert_2\Vert\pi_{t+1}-\pi^\star\Vert_2-\eta\alpha (KL(\pi^\star,\pi_{t+1})+KL(\pi_{t+1},\pi^\star))\\
% &\le& KL(\pi^*,\pi_t)-KL(\pi_{t+1},\pi_t)+\eta\underbrace{L\Vert \pi_t-\pi_{t+1}\Vert_2}_{L\le\frac{R_{\max}}{1-\gamma}}\Vert\pi_{t+1}-\pi^\star\Vert_2-\eta\alpha (KL(\pi^\star,\pi_{t+1})+KL(\pi_{t+1},\pi^\star))\\
% &\le& KL(\pi^*,\pi_t)-KL(\pi_{t+1},\pi_t)+\frac{\eta^{1/2}\Vert \pi_t-\pi_{t+1}\Vert_2^2}{2}+ \frac{\eta^{3/2} L^2\Vert\pi_{t+1}-\pi^\star\Vert_2^2}{2}-\eta\alpha (KL(\pi^\star,\pi_{t+1})+KL(\pi_{t+1},\pi^\star))\\
% &\le& KL(\pi^*,\pi_t)-KL(\pi_{t+1},\pi_t)+\eta^{1/2}b\cdot KL(\pi_{t+1},\pi_t)+ \eta^{3/2}L^2 b\cdot KL(\pi_{t+1},\pi^\star)\\
% &-&\eta\alpha (KL(\pi^\star,\pi_{t+1})+KL(\pi_{t+1},\pi^\star))
% \end{align}
Ineq.\eqref{after_elem_ineq} is obtained by the elementary inequality $2a\cdot b\le a^2+b^2$. By letting $\eta\le\min\{\frac{1}{b^2},\frac{\alpha^2}{b^2L^4}\}$, 
\begin{equation}
KL(\pi^*,\pi_{t+1})\le KL(\pi^\star,\pi_t)-\eta\alpha KL(\pi^\star,\pi_{t+1}).
\end{equation}
Repeat the above recursion, we have 
\begin{equation}
KL(\pi^\star,\pi_{t})\le (\frac{1}{1+\eta\alpha})^t KL(\pi^\star,\pi_0).
\end{equation}
\end{proof}

\section{Ablation Study}
\label{abaltion_study}
New hyper-parameters specific to PORL are $\alpha,\eta$ and the policy replacement interval $T$. According to Eq.\eqref{qre_obj}, $\alpha$ should be small to guarantee that the \emph{soft} optimal solution is close to the original optimal solution. We conduct ablation studies on the non-stationary HalfCheeta-v2 with 3 seeds, mainly varying $\eta$ and $T$. We keep the default $\alpha=0.2$ in non-stationary tasks. We first fix $\eta=10$, i.e., $1/\eta=0.1$ and vary $T$ in $\{10,100,500,1000,2000\}$. The final performances of the last policy are reported in Table~\ref{tab:ablation_t}. This implies that a large policy update interval may hinder the policy improvement, while smaller intervals can help achieve higher final performance. Although the best final performance becomes better as $T$ becomes smaller in Table~\ref{tab:ablation_t}, we find $T=1000$ works well for stationary and non-stationary tasks, and $T=10$ works better for adversarial training and competitive games. Intuitively, when the learning environments change more frequently, a smaller $T$ is more suitable.

\begin{table}[H]
    \centering
    \caption{Ablation studies on gradient steps for each iteration (fix $\eta=10$).}
\begin{tabular}{l|r@{~$\pm$~}lr@{~$\pm$~}lr@{~$\pm$~}lr@{~$\pm$~}lr@{~$\pm$~}l}\toprule
& \multicolumn{2}{c}{$T=2000$} & \multicolumn{2}{c}{$T=1000$} & \multicolumn{2}{c}{$T=500$} & \multicolumn{2}{c}{$T=100$} & \multicolumn{2}{c}{$T=10$}\\\midrule
HalfCheetah-v2 & $3187.9$& $1082.91$ & $3901.09$& $448.84$ & $4207.02$& $136.25$ & $4281.62$& $101.01$ & $ \mathbf{4545.25} $ & $ \mathbf{237.32} $\\
\bottomrule
    \end{tabular}
    \label{tab:ablation_t}
\end{table}

From theorem \ref{vanish_kl}, $\eta$ can be rather small. However, as the coefficient in Eq.\eqref{param_obj} is $1/\eta$, a very small $\eta$ will raise instability when we use neural network models. We then fix $T=1000$ and vary $\eta$ in $\{10,5,2,1,0.1\}$. The results are listed in Table~\ref{tab:ablation_eta}. We find PORL is not very sensitive to the choice of $\eta$ if $1/\eta$ is not too large. However, if $1/\eta$ becomes large, the policy can fail to converge and perform very badly.
\begin{table}[H]
    \centering
    \caption{Ablation studies on $\eta$ (fix $T=1000$).}
\begin{tabular}{l|r@{~$\pm$~}lr@{~$\pm$~}lr@{~$\pm$~}lr@{~$\pm$~}lr@{~$\pm$~}l}\toprule
& \multicolumn{2}{c}{$\eta=10.0$} & \multicolumn{2}{c}{$\eta=5.0$} & \multicolumn{2}{c}{$\eta=2.0$} & \multicolumn{2}{c}{$\eta=1.0$} & \multicolumn{2}{c}{$\eta=0.1$}\\\midrule
HalfCheetah-v2 & $3901.09$& $448.84$ & $ \mathbf{4759.63} $ & $ \mathbf{134.32} $ & $4243.37$& $91.41$ & $4465.8$& $150.72$ & $-572.36$& $9.09$\\
\bottomrule
\end{tabular}
    \label{tab:ablation_eta}
\end{table}

% \begin{figure}[H]
%     \centering
%     \includegraphics[width=0.9\textwidth]{}
%     \caption{Return curves in $2$ multi-agent particle environment competitive tasks. The returns for player 1 (the adversary) and 2 (the main agent) are plotted respectively. The curves are shaded with standard errors.}
%     \label{fig:mpe_full_ret}
% \end{figure}
We also verify that a non-zero $\alpha$ is necessary for the last-iterative convergence. We simply set $\alpha=0$ in the two MPE competitive games and plot the returns of the learning policy in Figure~\ref{fig:mpe_PORL_abalaiton}. As these two games are not rigorously zero-sum, the performance gap of a joint policy will reduce but not necessarily vanish. As shown in Figure~\ref{fig:mpe_PORL_abalaiton}, the newest joint policy can fluctuate when $\alpha=0$, while for $\alpha=0.01$, the performance gap stably reduces.
\begin{figure}[H]
    \centering
    \includegraphics[width=0.9\textwidth]{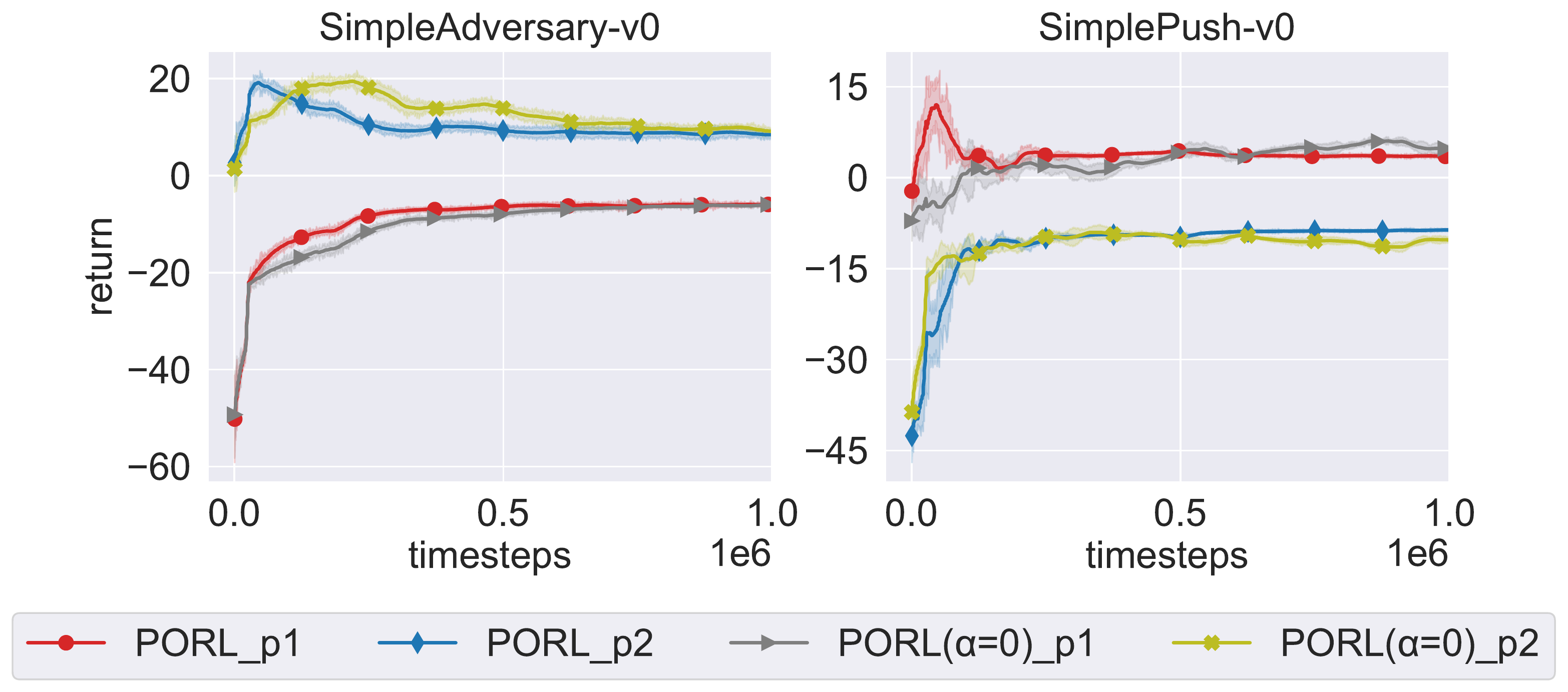}
    \caption{Return curves in $2$ multi-agent particle environment competitive tasks. The returns for player 1 (the adversary) and 2 (the main agent) are plotted respectively. The curves are shaded with standard errors. If an algorithm can converge, the performance between the two players necessarily becomes smaller as the training proceeds.}
    \label{fig:mpe_PORL_abalaiton}
\end{figure}

We perform an extra ablation on the anticipatory parameter $\eta$ in NFSP. As we observe from \citep{BCQ}, if the behavior policy and the learning policy are too disjoint, the off-policy algorithm cannot easily obtain an effective best response policy from the replay buffer. When $\eta$ in NFSP is small, the learning of BR is less effective, while when $\eta$ is large, the average policy converges more slowly. As shown in Figure \ref{fig:mpe_NFSP_ret}, a small $\eta$ does fail to converge and a large $\eta$ converges slowly.
\begin{figure}[H]
    \centering
    \includegraphics[width=0.9\textwidth]{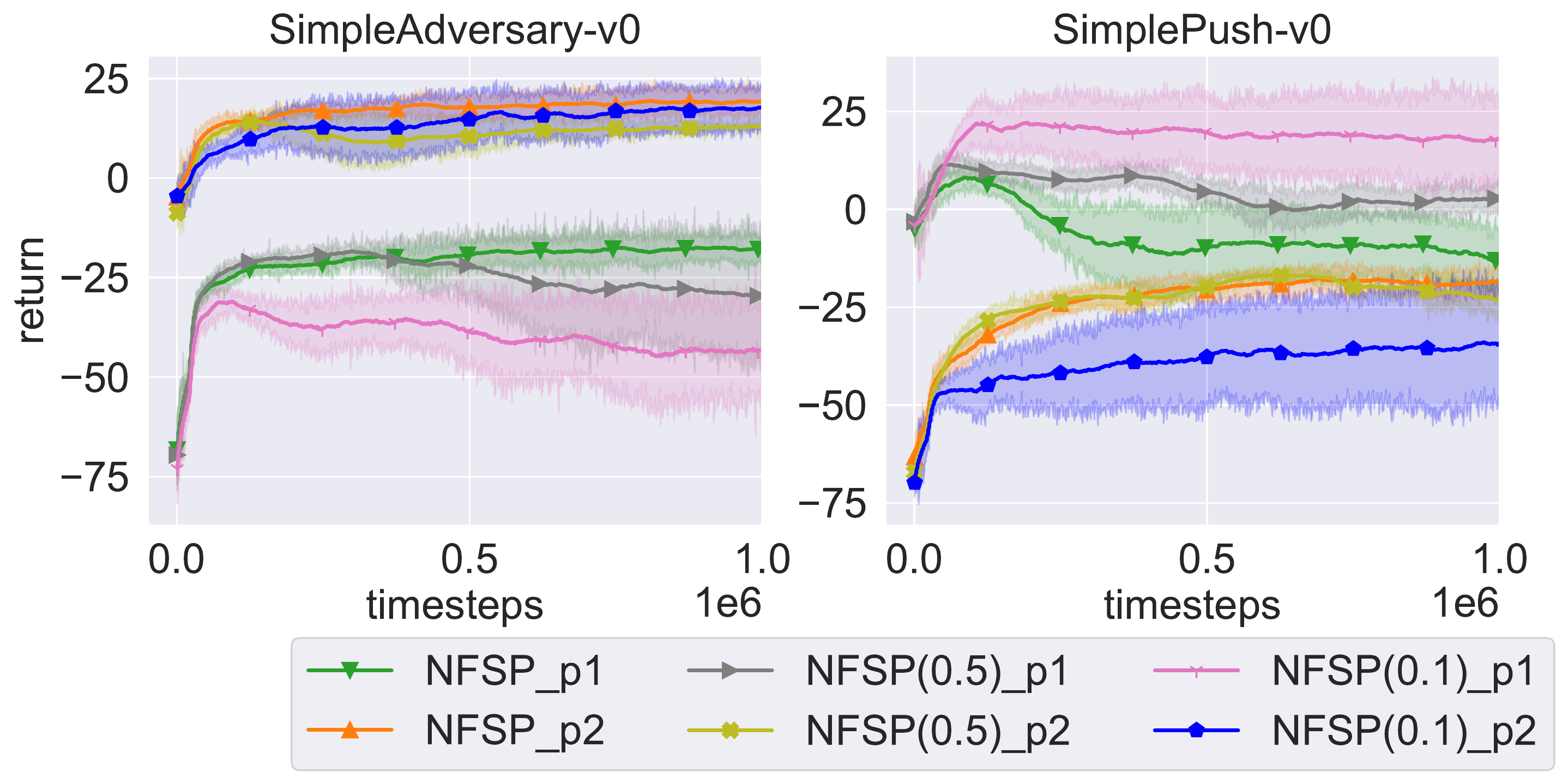}
    \caption{Return curves in $2$ multi-agent particle environment competitive tasks. The returns for player 1 (the adversary) and 2 (the main agent) are plotted respectively. The curves are shaded with standard errors.}
    \label{fig:mpe_NFSP_ret}
\end{figure}

\section{Hyper-parameters and Implementation Details}
\label{param_detail}
For SAC, PORL, and NFSP's best response oracle, which is also SAC, we all use double $Q$ and target $Q$ nets. For Line 5 and 8 in algorithm \ref{orm}, we only perform one gradient update, i.e., $K_1=K_2=1$ in all the experiments. Then we replace the $Q$ function and previous round policy (Line 10 in algorithm \ref{orm}) every $T$ gradient updates. All the value networks and policy networks share the same parameters in the same task. The network architecture and learning rates are referenced from openAI baselines for MuJoCo and MPE \citep{mpe17}. The shared parameters are listed in Table \ref{table_supp_share_hyper_parameters} and \ref{table_supp_mpe_share_hyper_parameters}. The parameters used in the final experiments are list in Table \ref{table_supp_porl_hyper_parameters}, \ref{table_supp_sac_hyper_parameters}, and \ref{table_supp_nfsp_hyper_parameters}. We also perform a simple grid search for PORL in adversarial training and MPE, and for NFSP in MPE. The search space are listed in Table \ref{table_supp_porl_grid_search_range} and \ref{table_supp_nfsp_grid_search_range}.

\begin{table}[H]
    \centering
    \caption{Shared hyper-parameters of PORL and SAC on MuJoCo domains.}
    \label{table_supp_share_hyper_parameters}
    \begin{tabular}{lr}\toprule
\textbf{Attributes} & \textbf{Value}\\\midrule
policy learning rate&  $3e-4$\\
value learning rate&  $1e-3$\\
discount factor $\gamma$ & $0.99$ \\
batch size & $256$ \\
target value updating factor $\tau$ & $0.995$ \\
\# policy gradient steps per sample & 1 \\
\# policy hidden units & $[256, 64]$\\
activation function of policy hidden layers & leaky\_relu \\
\# value gradient steps per sample & 1 \\
\# value hidden units & $[256, 64]$\\
activation function of policy hidden layers & leaky\_relu \\
\# replay buffer & $1e6$\\
\bottomrule
\end{tabular}
\end{table}

\begin{table}[H]
    \centering
    \caption{Shared hyper-parameters of PORL, SAC and NFSP on MPE domains.}
    \label{table_supp_mpe_share_hyper_parameters}
    \begin{tabular}{lr}\toprule
\textbf{Attributes} & \textbf{Value}\\\midrule
policy learning rate&  $1e-3$\\
value learning rate&  $1e-3$\\
discount factor $\gamma$ & $0.95$ \\
batch size & $256$ \\
target value updating factor $\tau$ & $0.995$ \\
\# policy gradient steps per sample & 1 \\
\# policy hidden units & $[64, 64]$\\
activation function of policy hidden layers & leaky\_relu \\
\# value gradient steps per sample & 1 \\
\# value hidden units & $[64, 64]$\\
activation function of policy hidden layers & leaky\_relu \\
\# replay buffer & $2e5$\\
\bottomrule
\end{tabular}
\end{table}

\begin{table}[H]
    \centering
    \caption{Hyper-parameters of PORL used in the experiments.}
    \label{table_supp_porl_hyper_parameters}
    \begin{tabular}{lr}\toprule
\textbf{Attributes} & \textbf{Value}\\\midrule
$\eta$ &  $10.0$\\
value of $T$ & $\begin{cases}  10 \text{ (MPE, GAIL)}\\  1,000 \text{ ((non-)stationary MuJoCo)} \end{cases}$ \\
$\alpha$ & $\begin{cases} 0.01 \text{ (MPE, GAIL)} \\ 0.2 \text{ ((non-)stationary MuJoCo)} \end{cases}$ \\                   
\bottomrule
\end{tabular}
\end{table}

\begin{table}[H]
    \centering
    \caption{Hyper-parameters of SAC used in the experiments.}
    \label{table_supp_sac_hyper_parameters}
    \begin{tabular}{lr}\toprule
\textbf{Attributes} & \textbf{Value}\\\midrule
target entropy for auto-tuning $\alpha$& \#Dim($\mathcal{A}$)$\times 1.5$\\
\bottomrule
\end{tabular}
\end{table}

\begin{table}[H]
    \centering
    \caption{Hyper-parameters of NFSP used in the experiments.}
    \label{table_supp_nfsp_hyper_parameters}
    \begin{tabular}{lr}\toprule
\textbf{Attributes} & \textbf{Value}\\\midrule
target entropy for auto-tuning $\alpha$& \#Dim($\mathcal{A}$)$\times 1.5$\\
anticipatory parameter $\eta$ & $0.2$\\
\# average policy buffer & $2e6$\\
learning rate for average policy & $1e-3$\\
\bottomrule
\end{tabular}
\end{table}

\begin{table}[H]
    \centering
    \caption{Grid search range for $\eta$, $T$, and $\alpha$ of PORL on adversarial training and MPE.}
    \label{table_supp_porl_grid_search_range}
    \begin{tabular}{lr}\toprule
\textbf{Attributes} & \textbf{Value}\\\midrule
$\alpha$ & $\{0.01, 0.02, 0.05,0.1\}$ \\
$\eta$ & $\{10.0, 5.0, 2.5\}$ \\
$T$ & $\{10, 100\}$ \\
\bottomrule
\end{tabular}
\end{table}

\begin{table}[H]
    \centering
    \caption{Grid search range for $\eta$ of NFSP on MPE.}
    \label{table_supp_nfsp_grid_search_range}
    \begin{tabular}{lr}\toprule
\textbf{Attributes} & \textbf{Value}\\\midrule
$\eta$ & $\{0.1, 0.2, 0.5\}$ \\
\bottomrule
\end{tabular}
\end{table}

\end{document}